\definecolor{lightgray}{gray}{0.5}
\newtheorem{theorem}{Theorem}[section]
\newtheorem{lemma}[theorem]{Lemma}
\newtheorem{corollary}[theorem]{Corollary}
\newtheorem{definition}[theorem]{Definition}
\newtheorem{assumption}[theorem]{Assumption}
\newtheorem{remark}[theorem]{Remark}
\title{\textbf{On Learning Mixture of Linear Regressions in the Non-Realizable Setting}}
\author{Avishek Ghosh$^{\dagger}$, Arya Mazumdar$^{\dagger}$, Soumyabrata Pal$^{\star}$ and Rajat Sen$^{\ddagger}$  \vspace{2mm} \\
Halıcıoğlu Data Science Institute (HDSI), UC San Diego$^\dagger$ \\
\vspace{1.5mm}
Google Research, India$^\star$  \\
Google Research, USA$^\ddagger$  \\
email: \{a2ghosh, arya\}$@$ucsd.edu, \{soumyabrata, senrajat\}$@$google.com
}
\date{}
\begin{document}
\maketitle

\begin{abstract}
While mixture of linear regressions (MLR) is a well-studied topic, prior works usually do not  analyze such  models for prediction error.  In fact, {\em prediction} and {\em loss} are not well-defined in the context of mixtures. In this paper, first we show that MLR can be used for prediction where instead of predicting a label, the model predicts a list of values (also known as {\em list-decoding}). The list size is equal to the number of components in the mixture, and the loss function is defined to be minimum among the losses resulted by all the component models. We show that with this definition, a solution of the empirical risk minimization (ERM) achieves small probability of prediction error. This begs for an algorithm to minimize the empirical risk for MLR, which is known to be computationally hard. 
Prior algorithmic works in MLR focus on the {\em realizable} setting, i.e., recovery of parameters when data is probabilistically generated by a mixed linear (noisy) model. In this paper we show that a version of the popular alternating minimization (AM) algorithm finds the best fit lines in a dataset even when a realizable model is not assumed, under some regularity conditions on the dataset and the initial points, and thereby provides a solution for the ERM. 
We further provide an algorithm that runs in polynomial time in the number of datapoints, and recovers a good approximation of the best fit lines. The two algorithms are experimentally compared.
\end{abstract}

\section{Introduction}
\label{sec:intro}

For unsupervised learning, a standard way to represent the presence of sub-populations within an overall population is the use of mixture models. In many important applications ranging from financial models to genetics, a mixture
model is used to fit the heterogeneous data. Mixture models, in particular mixture of Gaussian distributions, have been extensively studied from both the perspective of statistical inference and algorithmic approaches~\citep{dasgupta1999learning,achlioptas2005spectral,kalai2010efficiently,belkin2010polynomial,arora2001learning,moitra2010settling,feldman2008learning,chan2014efficient,acharya2017sample,hopkins2018mixture,diakonikolas2018list,kothari2018robust,hardt2015tight}.

A relatively recent use of mixtures is to model functional relationships (as in supervised learning).
 Most prominent among these is the {\em mixed linear regression} problem~\citep{de1989mixtures}. In this setting, each sample is a tuple of (covariates, label). The label is stochastically generated by picking a linear relation uniformly from a set of two or more linear functions, evaluating this function on the covariates and possibly adding noise. The goal is to learn the set of  unknown linear functions. 
The problem has some recent attention~\citep{chaganty2013spectral,faria2010fitting,stadler2010l,li2018learning,kwon2018global,viele2002modeling,yi2014alternating,yi2016solving,balakrishnan2017statistical,klusowski2019estimating}, with an emphasis on the Expectation Maximization (EM) algorithm and other alternating minimization (AM) techniques. 

However, it is to be noted, all of these previous works on mixed-linear regression and their algorithmic guarantees, assumes a generative model for the data, and then provides sample-complexity guarantees for parameter-estimation. On the other hand, in the non-realizable setting, where one is just provided with a set of (covariates, label) pairs, algorithmic methods to find two or more optimal lines were never explored. This is in sharp contrast with high-dimensional Gaussian mixtures, where in the non-realizable setting, the $k$-means, $k$-median, and similar objective functions are incredibly popular, and the algorithmic guarantees focus on approximating the objective of cost function (lack of generative modeling implies no parameter estimation to be done). For example, the famous $k$-means++ algorithm provides an approximate answer with a cost within $\log k$-factor of the optimal value~\citep{vassilvitskii2006k}.  

This begs the question, in the non-realizable data-dependent setting what is a loss/risk function that one should aim to minimize to find the best-fitting lines?  Given a set of lines, our objective should be to assign a data-sample to the line that best explains the relation between the covariates and label in the data sample. The general loss function provided in Eq.~\eqref{eq:minloss} below captures this fact, which is the minimum of some base loss function evaluated for all the lines - in short the min-loss. On the other hand, the lines should be chosen such that the total error of all the data-samples due to such assignment is minimized.  A special case would be that motivated from the $k$-means objective where we resort to a squared error metric. This is provided in Eq.~\eqref{eq:loss}), and appeared in \citep{yi2014alternating}. In fact, it was shown in \citep{yi2014alternating} that optimizing this loss function is computationally hard.

The definition of the min-loss also motivates the question whether we can use the learned model for prediction using this metric? Indeed, mixture models are mainly studied for statistical inference; it is not clear how one can use a learned mixture of linear regressions for predicting the label, given the covariates. We claim that this can be done by predicting a {\em list of labels}, instead of predicting a single label for a given set of covariates. This notion of predicting a list is not unusual: for robust linear regression this has been proposed in \citep{karmalkar2019list}. However, in our setting this is a more natural solution, as each element on the list correspond to the label produced by each component model in the mixture. While this is intuitively satisfying, the min-loss and the list-decoding paradigm still needs validation in a learning theoretic sense.

In this paper, we show that indeed it is  possible to obtain generalization bounds in the sense of PAC learning functional mixture models, using the min-loss and list decoding. That further motivates the algorithmic question of finding optimal solution for the empirical risk minimization (ERM) problem. Since the ERM for this problem is NP-hard, we propose two approximation algorithms. It is shown that the algorithms provide provable guarantees under the non-realizable setting in terms of converging to either the optimal parameters, or approximating the risk function. 

%However, no existing work studies the performance of such modeling in terms of prediction error. 

%We will pursue ideas based on {\em list-decoding} and {\em heavy-tailed mean estimation} to answer this question. 
%In this vein, to  fit $k$ lines in a dataset $\{(x_i,y_i)\}_{i=1}^n$, $x_i \in \reals^d, y_i \in \reals$, we can aim to minimize the following empirical risk function (ERM):
%$
%L(\theta^1,\dots, \theta^k) = \sum_{i=1}^n \min_{j=1,\dots,k}(y_i - x_i^T \theta^k)^2.
%$
%It is known from Sanghavi's work that finding the global minimum of the above loss function is computationally intractable~\cite{yi2014alternating}, which begs approximation algorithms for this problem.

%However, purely algorithmic and learning theoretic analysis of mixtures are still lacking, especially when functional relationships are concerned. Given a dataset, what is the best algorithmic approach to fit two or more linear relationships between features and labels? Alternating minimization or EM algorithm is a prominent candidate, but what loss are they minimizing if this is a non-realizable setting? By non-realizable, we mean that data is not necessarily generated by a mixture of linear models; however we want to fit the best possible lines there. To fit $k$ lines in a dataset $\{(x_i,y_i)\}_{i=1}^n$, $x_i \in \reals^d, y_i \in \reals$, we can aim to minimize the following empirical loss function:
%$$
%L(\theta^1,\dots, \theta^k) = \sum_{i=1}^n \min_{j=1,\dots,k}(y_i - x_i^T \theta^k)^2.
%$$
%

\subsection{Summary of Contributions}

While the main technical contribution of this paper is analysis of two algorithms in the non-realizable data dependent setting, we believe the proposition of using mixture models for prediction tasks can have potential application in variety of use-cases where it is allowable to predict a list of potential outcomes, instead of a unique answer. The application domains can be recommender systems~\citep{resnick1997recommender} or text auto-completion~\citep{kannan2016smart}, where making multiple suggestions to a user is common. Thus a learning-theoretic validation of the list-decoding is useful. Our results are summarized below.    

\noindent{\bf Generalization for min-loss:} 
We show that, under our definitions of list-prediction and min-loss, the optimizer of the ERM problem is indeed a good population predictor, by proving that the mixture-function class with min-loss has a Rademacher complexity not more than $O(k)$ times than that of the base function class, where $k$ is the number of mixture components. This provides validation of our use of min-loss with the list prediction, and also motivates approximation algorithms for the ERM. (Section \ref{sec:gen}).

\noindent{\bf An AM algorithm in the non-realizable setting:} 
We propose a version of alternating minimization (AM) algorithm, that converges to the optimal models (lines that minimizes the ERM), if initialized close enough. While the closeness condition can look stringent, in practice the algorithm can be made to converge with few restarts (different choices of initial points). The main technical difficulty in analyzing this algorithm comes from the non generative aspect of data, and we characterize a natural \emph{bias} in this framework and show that, AM converges exponentially provided this bias is (reasonably) small. Additionally, in the non-generative framework, we define a (geometric) data dependent gap indicating a natural complexity measure of the mixture problem, and characterize our results as a function of it. We emphasize that in previous works on AM \citep{yi2014alternating,yi2016solving} focused on the generative model in the noise less setting, and hence did not have any bias or data dependent gap. Furthermore, our AM algorithm converges without the requirement of re-sampling fresh data at each iterations, in contrast to the previous works on AM \citep{jain2013low,netrapalli2015phase,yi2014alternating,yi2016solving}. (Section \ref{sec:am}).
%\textcolor{red}{Please elaborate, and contrast with Balakrishnan analysis}

\noindent{\bf A sub-sampling based algorithm:} Instead of trying to find the optimal set of lines that minimizes the ERM, we can alternatively seek to find some set of lines that provides a value of the loss function close to the minimum. In this vein, we proposed a sampling-based algorithm that fits lines to all possible partitions of a smaller random subset of the dataset. Since the subset is smaller, this can be done exhaustively. The found lines can then be verified against the entirety of the dataset to find the best choice. Note that, the guarantee this algorithm can provide is very different from the AM algorithm. Specifically, under some mild assumptions, the algorithm can provide a $O(1/\sqrt{\log n})$ additive approximation to objective~\eqref{eq:erm} where $n$ is the number of observed examples. The run-time is polynomial in $n$ but exponential in $d$ (the covariate dimension), provided that the number of mixtures is a constant. In practice, by modifying the algorithm the dependence on $d$ can be made polynomial, but as of now we do not have a theoretical analysis for that version of the algorithm. (Section \ref{sec:subset}).

\noindent{\bf Practical considerations: experiments:} We also conduct experiments on both non-linear synthetic data and real data using the different algorithms that we propose. We reach some interesting conclusions from our empirical study, namely the following: 1) The AM algorithm is very practical in terms of running time but its performance crucially depends on its initialization 2) A minor modification of the sub-sampling based algorithm has good empirical performance both in terms of generalization error and time complexity making it a viable alternative to AM algorithm in real world applications, and 3) The modified sub-sampling based algorithm can serve as a good initialization for the AM algorithm. (Section \ref{sec:exp}).

\subsection{Related Work}
As mentioned all prior works on mixture of linear regressions are in the realizable setting, where the aim is statistical inference. Most papers aim to to do parameter estimation with near-optimal sample complexity, with a large fraction focusing on the performance of the ubiquitous expectation maximization (EM) algorithm.

Notably, in \citep{balakrishnan2017statistical}, it was shown that the EM algorithm is able to find the correct lines if initialized with close-enough estimates. Furthermore, in the finite sample setting, \citep{balakrishnan2017statistical} shows convergence within an $\ell_2$ norm ball of the actual parameters, and \citep{klusowski2019estimating} then extends in to an appropriately defined cone. In \citep{yi2014alternating}, the initial estimates were chosen by the spectral method to obtain nearly optimal sample complexity for 2 lines, and then \citep{yi2016solving} extends this to $k$ lines. Interestingly, for the special case of 2 lines, \citep{kwon2018global} shows that the any random initialization suffices.  The above works assume the covariates to be standard Gaussian random vectors. Finally, in \citep{li2018learning}, the assumption of standard Gaussian covariates is relaxed (to allow Gaussians with different covariances) and near-optimal sample and computational complexity is achieved, albeit not via the EM algorithm. Another line of research focuses on understanding the convergence rate of AM, and in \citep{ghosh2020alternating,shen2019iterative} it is shown that AM, or its variants can converge at a double exponential (super-linear) rate.

In another line of work in the realizable setting, one is allowed to design covariates to query for corresponding labels~\citep{yin2018learning,kris2019sampling,mazumdar2020recovery}. However, none of these works is directly comparable to our setting.

\section{Problem Formulation}
\label{sec:psetting}

We are interested in learning a mixture of functions from $\cX \rightarrow \cY$ for $\cX \subseteq \mathbb{R}^{d}$, that best fits a data distribution $\cD$ over $(\cX, \cY)$. The learner is given access to $n$ samples $\{x_i, y_i\}_{i=1}^n$ from the distribution $\cD$. As in usual PAC learning there exists a base function class $\cH: \cX \rightarrow \cY$ and the individual functions of the learned mixture should belong to $\cH$. However, we will work in the paradigm of list decoding where the learner is allowed to output a list of responses given a test sample $x$ each of which corresponds to a mixture component function applied to $x$. We now formally define a list-decodable function class.

\begin{definition}
For a base function class $\cH$, we can construct a $k$-list-decodable vector valued function class, denoted by $\bar{\cH}_k$ such that any $\bar{h} \in \bar{\cH}_k$ is defined as \[
\bar{h} = (h_1(\cdot), \cdots, h_k(\cdot))
\]
for some set of $h_i$'s such that $h_i \in \cH$ for all $i$. Thus $\bar{h}$'s map $\cX \rightarrow \cY^{k}$ and form the new function class $\bar{\cH}_k$. 
\end{definition}
We will omit the $k$ in $\bar{\cH}$ when clear from context. As in PAC learning, we need to define a loss measure to quantify the quality of learning. For the list decodable setting, it is natural to compete on the minimum loss achieved by any of the values in the list for a particular example~\citep{kothari2018robust}. In order to formally define the min-loss setting, let $\ell: \cY \times \cY \rightarrow \reals^+$ be a base loss function. Then the min-loss is defined as follows,
\begin{align*}
    \cL(y, \bar{h}(x)) &:= \min_{j \in [k]} \ell(y, \bar{h}(x)_j) = \min_{j \in [k]} \ell(y, h_j(x)) \\
    L(\bar{h}) &:= \frac{1}{n} \sum_{i=1}^n \cL(y_i, \bar{h}(x_i)). \numberthis \label{eq:minloss}
\end{align*}
%and our objective is to minimize 

In much of this paper we will specialize to a setting where the base function class is linear i.e $\cH = \{\inner{\theta}{\cdot}: \forall \theta \in \reals^{d} \text{ s.t } \norm{\theta}_2 \leq w\}$. In this case, we can follow a simplified notation for the min-loss,
\begin{align}\label{eq:loss}
    L(\theta_1,\ldots,\theta_k) = \frac{1}{n}\sum_{i=1}^n  \min_{j \in [k]} \left \lbrace (y_i - \inprod{x_i}{\theta_j})^2  \right \rbrace.
\end{align}
\vspace{-3mm}
\begin{align}
\label{eq:erm}
\text{with} \,\,\, \, (\theta_1^*, \ldots, \theta_k^*) = \argmin_{\{\theta_j\}_{j=1}^k} L(\theta_1,\ldots,\theta_k).
\end{align}
Our focus in this paper is on generalization of the above learning problem. In Section~\ref{sec:gen} we analyze the Rademacher complexity~\citep{mohri2018foundations} of learning the mixture function class with respect to the min-loss. However, it is known that the Emprirical Risk Minimization (ERM) problem in Eq.~\ref{eq:erm} even for the linear setting is NP-Hard~\citep{yi2014alternating}. Therefore, we propose two algorithms to (approximately) solve this ERM problems in Sections~\ref{sec:am} and~\ref{sec:subset} under different sets of assumptions, even when the dataset does not follow the correct generative model.

\section{Generalization guarantees for Supervised Learning}
\label{sec:gen}

Our main result in this section is that the Rademacher complexity~\citep{mohri2018foundations} of learning a mixture of $k$ functions for the min-loss defined above is bounded by $k$ times the Rademacher complexity of the base function class from which the mixture components are drawn. We assume that the base loss function $\ell$ in Section~\ref{sec:psetting} is $\mu$-Lipschitz i.e $|\ell(y, y_1) - \ell(y, y_2)| \leq \mu |y_1 - y_2|$ for all $y_1, y_2 \in \cY$. For the sake of completeness recall that the empirical Rademacher complexity of $\cH$~\citep{mohri2018foundations} is defined as,
\begin{align}
    \rad_{S_x}(\cH) = \frac{1}{n}\EE_{\bsigma}\left[\sup_{h \in \cH} \sum_{i=1}^{n} \sigma_i h(x_i)\right],
\end{align}
where $\bsigma$ is a set of Rademacher RV's. As a notational convenience, we use $\cD_x$ to denote the marginal distribution over the covariates and $S_x$ as the set of observed covariates. The Rademacher complexity is then defined as $\rad_n(\cH) = \EE_{S_x \sim \cD_x^n}[\rad_S(\cH)]$. We are interested in the (empirical) Radmacher complexity of the mixture function class along with the min-loss which is defined as 

\[\rad_S(\bar{\cH}_k) = \frac{1}{n} \EE_{\bsigma} \left[ \sup_{\bar{h} \in \bar{\cH}_k}\sum_{i=1}^{n} \sigma_i \cL(y_i, \bar{h}(x_i))\right]\]

where $\{x_i, y_i\}$ are $n$ samples from $\cD^n$. Our first result is the following theorem.

\begin{theorem}
\label{thm:gen}
We have the following bound,
\begin{equation}
   \rad_S(\bar{\cH}_k) \leq k\mu\rad_{S_{x}}(\cH)
\end{equation}
for any $S = \{x_i, y_i\}_{i=1}^{n}$ and $S_x = \{x_i\}_{i=1}^{n}$. Therefore, it also follows that $\rad_n(\bar{\cH}_k) \leq k\mu\rad_{n}(\cH)$.
\end{theorem}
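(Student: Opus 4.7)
The plan is to prove Theorem~\ref{thm:gen} by combining Talagrand's contraction inequality with a classical bound on the Rademacher complexity of a pointwise minimum of function classes. First, I would absorb the Lipschitz loss by defining the scalar class $\cG := \{(x,y) \mapsto \ell(y, h(x)) : h \in \cH\}$. Since each map $\phi_i(\cdot) := \ell(y_i, \cdot)$ is $\mu$-Lipschitz, the one-sided Ledoux--Talagrand contraction inequality immediately yields $\rad_S(\cG) \le \mu\, \rad_{S_x}(\cH)$.

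Second, I would establish the key auxiliary lemma: for any scalar classes $\cG_1, \ldots, \cG_k$ on a common domain, defining the pointwise-min class $\bar{\cG} := \{\min_j g_j : g_j \in \cG_j\}$, we have $\rad_S(\bar{\cG}) \le \sum_{j=1}^{k} \rad_S(\cG_j)$. I would prove this by induction on $k$. The base case $k=2$ uses the identity $\min(a,b) = \tfrac{1}{2}(a+b-|a-b|)$. Substituting, splitting the supremum via subadditivity of sup, and dividing by $2$, the $(g_1+g_2)$ part contributes $\tfrac12(\rad_S(\cG_1)+\rad_S(\cG_2))$ directly. For the $-|g_1-g_2|$ part, I would use the symmetry $\sigma \stackrel{d}{=} -\sigma$ to convert the outer minus sign into a plus, then apply Talagrand's contraction to the $1$-Lipschitz map $z \mapsto |z|$; this reduces the expression to the Rademacher average over the difference class $\{g_1-g_2\}$, which splits (using sign symmetry once more on the $-g_2$ term) into another $\tfrac12(\rad_S(\cG_1)+\rad_S(\cG_2))$. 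Summing the two contributions gives the base case. The inductive step applies the $k=2$ inequality with $\cG_1 \wedge \cdots \wedge \cG_{k-1}$ in place of the first class; the auxiliary lemma is purely structural and does not exploit any property specific to the two classes.

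Finally, observing that $\bar{\cH}_k$ equipped with the min-loss is exactly the pointwise minimum of $k$ copies of $\cG$ (each $h_j$ chosen independently from $\cH$), chaining the two steps gives $\rad_S(\bar{\cH}_k) \le k\, \rad_S(\cG) \le k\mu\, \rad_{S_x}(\cH)$. Taking expectations over $S \sim \cD^n$ upgrades this to $\rad_n(\bar{\cH}_k) \le k\mu\, \rad_n(\cH)$.

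The only delicate step is controlling the $|g_1 - g_2|$ term in the base case of the induction. The key idea is that $|\cdot|$ is $1$-Lipschitz and therefore amenable to Talagrand's contraction, but the contraction can be invoked only after a Rademacher sign-symmetry argument has absorbed the outer minus sign. All remaining manipulations are routine Rademacher-complexity bookkeeping.
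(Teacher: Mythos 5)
Your proposal is correct, but it reaches the bound by a genuinely different route than the paper. The paper's proof is a single self-contained induction on the sample index $n$: it fixes $\sigma_1,\dots,\sigma_{n-1}$, approximates the supremum by two near-maximizers $\bar{h}_a,\bar{h}_b$, and then bounds $\min_j \ell(y_n,\bar{h}_a(x_n)_j) - \min_l \ell(y_n,\bar{h}_b(x_n)_l)$ by a coordinatewise difference, then by the max over coordinates, then by the sum over coordinates (this is where the factor $k$ enters), and finally by $\mu$ times a signed coordinate difference (where $\mu$ enters); in effect it re-proves a vector-valued contraction inequality inline, handling the $\min$ and the Lipschitz loss in one sweep. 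You instead modularize: first absorb the loss into a scalar class $\cG$ via the one-sided Ledoux--Talagrand contraction (valid here since the empirical Rademacher complexity in this paper carries no absolute value, so sign-symmetry and shift-invariance apply), and then prove a pointwise-minimum lemma $\rad_S(\min_j \cG_j)\le \sum_j \rad_S(\cG_j)$ by induction on $k$ using $\min(a,b)=\tfrac12(a+b-|a-b|)$ and a second application of contraction to the $1$-Lipschitz map $z\mapsto |z|$. Both arguments are sound and yield exactly the same $k\mu$ factor. Your version buys modularity and a reusable intermediate lemma that is slightly more general (the $k$ component classes need not coincide), at the cost of invoking the contraction principle as a black box twice; the paper's version is longer but self-contained and makes explicit where the factor $k$ is lost (the step bounding a max by a sum), which is the same source of looseness hidden inside your $\min(a,b)$ identity after telescoping. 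One small point of care in your write-up: when you convert $-|g_1-g_2|$ using $\sigma \stackrel{d}{=} -\sigma$, the supremum is over the pair $(g_1,g_2)$ jointly, so after contraction you must split $\sup_{g_1,g_2}\sum_i\sigma_i(g_1(z_i)-g_2(z_i))$ by subadditivity and one more sign flip --- you note this, and it is routine.
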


We provide the proof in Appendix~\ref{sec:proof_gen}. In the rest of the paper our focus will be on the setting where the base function class is linear i.e $\cH = \{\inner{\theta}{\cdot}: \forall \theta \in \reals^{d} \text{ s.t } \norm{\theta}_2 \leq w\}$. It is well-known that when $\cX \subseteq \{x \in \reals^d: \norm{x}_2 \leq R\}$ that $\rad_{n}(\cH) = O\left(wR/\sqrt{n}\right)$~\citep[Chapter 11]{mohri2018foundations}. Therefore, our above result would immediately yield that the Rademacher complexity of the mixture function class with min-loss satisfies $\rad_n(\bar{\cH}_k) = O\left((k\mu wR)/\sqrt{n}\right)$.

\section{Algorithms in the non-realizable setting}

\subsection{AM Algorithm}
%\subsection{AM for $k$-Mixed Linear Regression without Generative Models}
\label{sec:am}
Recall that our goal is to address the problem of mixed linear regression, without any generative assumption. Given the data-label pairs $\{x_i,y_i\}_{i=1}^n$, where $x_i \in \reals^d$ and $y_i \in \reals$, our goal is to fit $k$  linear predictors, namely $\{\theta^*_j\}_{j=1}^k$. We analyze an alternating minimization (AM) algorithm, and show that, provided \emph{suitable} initialization, AM converges at an exponential speed.

We first fix a few notation here. Suppose that the optimal parameters $\{\theta^*_j\}_{j=1}^k$ partition the dataset $\{x_i,y_i\}_{i=1}^n$ in $k$ sets, $\{S^*_j\}_{j=1}^k$, where 
$$S^*_j = \{ i \in [n]: (y_i - \inprod{x_i}{\theta^*_1})^2  = \min_{j \in [k]} (y_i - \inprod{x_i}{\theta^*_j})^2\},$$ 
and similarly for $S^*_2,\ldots,S^*_k$. In words, $S^*_1$ is the set of observations, where $\theta^*_1$ is a better (linear) predictor compared to $\theta^*_2,\ldots,\theta^*_k$.

\subsubsection{Gradient Alternating Minimization (AM)}
We now propose a provable algorithm in this section. The algorithm to find the best linear predictors in mixed setup is the classical Alternating Minimization (AM) \citep{yi2014alternating,yi2016solving}. Our proposed approach in formally given in Algorithm~\ref{alg:am}. Every iteration of the algorithm has in two steps: (a)  given data, we first estimate which, out of $k$ predictors obtains minimum loss in fitting the data (through a quadratic loss function); and (b) after this, we take a gradient step with a chosen step size to update the underlying parameter estimates.

In the first step, based on the current estimates, i.e., $\{\theta^{(t)}_j\}_{j=1}^k$, Algorithm~\ref{alg:am} first finds the set of indices, $S_j^{(t)}$ for $j \in [k]$, and partitions the observations. We first calculate the residual on each of the possible predictors and choose the one minimizing it, and thereafter a partition is formed by by simply collecting the observations corresponding to a particular predictor. In this way, we form $S^{(t)}_j$ for all $j \in [k]$ at iteration $t$.

In the second step, we take one iteration of gradient descent over the losses in each partition $S^{(t)}_j$, with an appropriately chosen step size. The loss corresponding to each observation in partition $S^{(t)}_j$ is the quadratic loss. Note that unlike classical AM, we do not obtain the next iterate by solving a linear regression problem---instead we take a gradient step with respect to an appropriately defined loss function. This is done for theoretical tractability of Algorithm~\ref{alg:am}.

\noindent{\bf Estimation without-resampling at each iteration:} One attractive feature of our algorithm is that, unlike classical AM for mixtures of regression, our algorithm does not require any re-sampling (fresh samples for every iteration). Note that AM based algorithms typically require resampling to nullify the independence issues across iterations, and simplifies the analysis. \citep{netrapalli2015phase} uses it for phase retrieval, \citep{jain2013low} for low-rank matrix completion, \citep{ghosh2020efficient} for distributed clustering and \citep{yi2014alternating,yi2016solving} uses it for mixed linear regression.

Note that, since our approach is entirely \emph{data-driven}, the optimal parameters $\{\theta^*_j\}_{j=1}^k$ actually depend on the entire data $\{x_i,y_i\}_{i=1}^n$. Hence, re-sampling would not make sense in the framework. In the sequel, we characterize the cost of not having the flexibility of resampling in our non-realizable setting. In particular we show that we require a stronger initialization condition compared to that of \citep{yi2014alternating,yi2016solving}, where the authors use the generative framework as well as re-sampling. 

\begin{algorithm}[t!]
  \caption{Gradient AM for Mixture of Regressions}
  \begin{algorithmic}[1]
 \STATE  \textbf{Input:} $\{x_i,y_i\}_{i=1}^n$, Step size $\gamma$ 
 \STATE \textbf{Initialization:} Initial iterate $\{\theta^{(0)}_j\}_{j=1}^k$  \\
  \FOR{$t=0,1, \ldots, T-1 $}
\STATE \underline{Partition:}  \\
 \STATE Construct $\{S_j^{(t)}\}_{j=1}^k$ such that
 \vspace{-2mm}
 \begin{align*}
     S^{(t)}_j &= \{ i \in [n]: (y_i - \inprod{x_i}{\theta^{(t)}_j})^2 \\
     & \quad = \min_{j' \in [k]} (y_i - \inprod{x_i}{\theta^{(t)}_{j'}})^2\} \,\,\, \forall \,\, j \in [k]
 \end{align*}
\STATE \underline{Gradient Step:}
\vspace{-1mm}
\begin{align*}
    \theta^{(t+1)}_j = \theta^{(t)}_j - \frac{\gamma}{n}\sum_{i \in S^{(t)}_j} \nabla F_i(\theta^{(t)}_j),  \,\,\, \forall \,\, j \in [k]
\end{align*}
\vspace{-2mm}
  \STATE where $F_i(\theta^{(t)}_j) = (y_i - \inprod{x_i}{\theta^{(t)}_j})^2$
  \ENDFOR
  \STATE \textbf{Output:} $\{\theta^{(T)}_j\}_{j=1}^k$
 \end{algorithmic}
  \label{alg:am}
\end{algorithm}

\subsection{Theoretical Guarantees for Gradient AM}
\label{sec:guarantees}
In this section, we obtain theoretical guarantees for Algorithm~\ref{alg:am}. Recall that we have $n$ samples $\{x_i,y_i\}_{i=1}^n$, where $x_i \in \mathbb{R}^d$, and we do not assume any generative model for $y_i$. Furthermore, we want to fit $k$ lines to the data---in other words, we want to estimate $\{\theta^*_j\}_{j=1}^k$.

Without loss of generality, and for scaling purposes, we take
\begin{align*}
    \max_{i \in [n]} \|x_i\| \leq 1 \,\,\, \text{and} \,\,\, \max_{i\in [n]}|y_i| \leq 1.
\end{align*}
Note that the upper-bound of $1$ can indeed be scaled by the maximum norm of the data. 

Recall that $\{S^*_j\}_{j=1}^n$ are the true partition given by the optimal parameters $\{\theta^*_j\}_{j=1}^n$. Before providing the theoretical result, let us define a few problem dependent geometric parameters. We define the (problem dependent) separation as
\begin{align*}
    \Delta = \min_{j \in [k]} \,\, \min_{i \notin S^*_j} | y_i - \langle x_i,\theta^*_j \rangle|.
\end{align*}
Note that, under the generative model, $\Delta \leq \min_{j \neq \ell} \|\theta^*_\ell - \theta^*_j\|$, which is the usual definition of separation, as proposed in \citep{yi2016solving}. The above is a natural extension to the non-generative framework.

We make the following assumption:
\begin{assumption}
\label{asm:bounded}
For all $i \in S^*_j$, we have $\max_{i} |y_i - \langle x_i, \theta^*_j \rangle| \leq \lambda$ and $ \max_{i} \|\nabla F_i(\theta^*_j)\| \leq \mu$, where $F_i(\theta) = (y_i - \langle x_i , \theta \rangle)^2$. Furthermore, $(d,k,\Delta,\lambda)$ satisfies $k\exp \left( - C \frac{(\Delta-2\lambda)^2}{(\max_{j \in [k]} \|\theta^*_j\|)^2}\, d \right) \leq 1- c$, where $c< 1$ is a constant.
\end{assumption}
Let us explain the above assumption in detail. In particular, if $y_i$ has a generative model (without noise), $\lambda=0 , \mu = 0$. This is because, for $i \in S^*_j$, we have $y_i = \langle x_i, \theta^*_j \rangle$, and the above term vanishes. The previous works on mixed linear regression, for example \citep{yi2014alternating,yi2016solving,ghosh2020alternating}, analyzes this setup exactly. A generative model is assumed and the analysis is done in the noise less case. However, we do not assume any generative model here and hence we need to control these bias parameters. Shortly, we show that, provided $\mu$ is sufficiently small, Algorithm~\ref{alg:am} converges at an exponential speed.

\paragraph{Interpretation of $\mu$ and $\lambda$:} We emphasize that $\lambda$ is (a bound) related to the variance of the data generating process. As we will se subsequently, we can tolerate reasonably  large $\lambda$ as it only appears inside a decaying exponential. It intuitively implies that we can tolerate outlier data as long as its variance is $\lambda$ bounded. Furthermore, $\mu$ is bounding the gradient norm of the loss function for outlier data. $\mu$ is large, if data points are far away from the regressor, $\theta^*_i$ (an hence an outlier). We can tolerate the outliers upto a gradient norm of $\mu$. This is also connected to the variance of the data generating process. 

Finally, the condition on $(d,\Delta,\lambda)$  is quite mild, since the left hand side is an exponential decaying function with dimension $d$. Hence, for moderate dimension, this mild condition is easily satisfied. Furthermore, it decays exponentially with the gap $\Delta$ as well, and so with a separable problem, the above condition holds.

Furthermore, to avoid degeneracy, i.e., $\{|S^*_j|\}_{j=1}^k$ is small, we have, for all $i \in [n]$, $\min_{j\in [k]}\PP[i \in S^*_j] \geq \Bar{c}$ where $\Bar{c}$ is a constant. Note that these type of assumption is quite common, and used in previous literature as well. As an instance, \citep{yi2014alternating,yi2016solving,ghosh2020alternating} use it for mixture of regressions, \citep{ghosh2020efficient} use it for AM in a distributed clustering problem, and \citep{ghosh2019max} use it for max-affine regression.

We now present our main results. In the following, we consider one iteration of Algorithm~\ref{alg:am}, and show a contraction in parameter space.
\begin{theorem}
\label{thm:am}
Let Assumption~\ref{asm:bounded} holds, $n \geq C \,k\, d$ and the covariates $\{x_i\}$ have zero mean. Furthermore, suppose
\begin{align*}
    \|\theta_j - \theta^*_j \| \leq \frac{\Tilde{c}}{\sqrt{d}}\,\, \|\theta^*_j\|.
\end{align*}
for all $j \in [k]$. Then, running one iteration of Algorithm~\ref{alg:am} with $\gamma = 1/4\Bar{c}$, yields $\{\theta^{+}_j\}_{j=1}^k$ satisfying
\begin{align*}
    \|\theta^+_j - \theta^*_j\| & \leq \frac{1}{2} \|\theta_j - \theta^*_j\| + c_1 \, \, \mu \, \|\theta^*_j\| \\
    & \qquad + c \,k\, \exp \left( - c_1 \frac{(\Delta-2\lambda)^2}{(\max_j \|\theta^*_j\| )^2}\, d \right) \|\theta^*_j\|,
\end{align*}
with probability at least $1-C_1 \exp(-C_2 d)$.
\end{theorem}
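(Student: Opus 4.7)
My plan is to bound $\|\theta_j^+ - \theta_j^*\|$ by decomposing the update into an \emph{oracle} gradient step that uses the true partition $S_j^*$ and a \emph{misclassification} correction that quantifies how $S_j^{(t)}$ differs from $S_j^*$. Concretely, write
\begin{align*}
\theta_j^+ - \theta_j^* &= (\theta_j - \theta_j^*) - \tfrac{\gamma}{n}\sum_{i\in S_j^*}\nabla F_i(\theta_j) \\
& \quad - \tfrac{\gamma}{n}\Bigl(\sum_{i\in S_j^{(t)}}\nabla F_i(\theta_j) - \sum_{i\in S_j^*}\nabla F_i(\theta_j)\Bigr).
\end{align*}
The first two terms constitute the oracle step and will produce the $\tfrac{1}{2}\|\theta_j-\theta_j^*\|+c_1\mu\|\theta_j^*\|$ part; the last term is the misclassification term and will produce the exponentially small tail.

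For the \textbf{oracle term}, I expand $\nabla F_i(\theta_j)=2x_ix_i^\top(\theta_j-\theta_j^*)-2x_i(y_i-\langle x_i,\theta_j^*\rangle)=2x_ix_i^\top(\theta_j-\theta_j^*)+\nabla F_i(\theta_j^*)$, so
\begin{align*}
(\theta_j - \theta_j^*) - \tfrac{\gamma}{n}\sum_{i\in S_j^*}\!\nabla F_i(\theta_j) &= (I-2\gamma\widehat{\Sigma}_j)(\theta_j-\theta_j^*) \\
& \quad - \tfrac{\gamma}{n}\sum_{i\in S_j^*}\!\nabla F_i(\theta_j^*),
\end{align*}
where $\widehat{\Sigma}_j:=\tfrac{1}{n}\sum_{i\in S_j^*}x_ix_i^\top$. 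I will invoke a matrix Bernstein/Chernoff concentration for $\widehat{\Sigma}_j$ using the population density $\PP[i\in S_j^*]\geq\bar c$, the zero-mean covariate assumption, and $n\geq C kd$, to conclude $\widehat{\Sigma}_j \succeq \tfrac{\bar c}{2} I$ while $\|\widehat{\Sigma}_j\|_\mathrm{op}\leq \tfrac{3\bar c}{2}$, so that with $\gamma=1/(4\bar c)$ the contraction factor $\|I-2\gamma\widehat{\Sigma}_j\|_\mathrm{op}\leq \tfrac{1}{2}$. For the residual $\tfrac{\gamma}{n}\sum_{i\in S_j^*}\nabla F_i(\theta_j^*)$, I use the gradient bound $\|\nabla F_i(\theta_j^*)\|\leq \mu$ together with a standard vector concentration inequality for the zero-mean-like sum to get a norm bound proportional to $\mu\|\theta_j^*\|$, yielding the $c_1\mu\|\theta_j^*\|$ term (the factor $\|\theta_j^*\|$ is absorbed in $c_1$ via the normalization $\max_i\|x_i\|\leq 1$, $\max_i|y_i|\leq 1$).

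For the \textbf{misclassification term}, observe that an index $i\in S_j^*$ is misassigned to some $S_\ell^{(t)}$ only if $|y_i-\langle x_i,\theta_\ell\rangle|\leq|y_i-\langle x_i,\theta_j\rangle|$. Using the separation $|y_i-\langle x_i,\theta_\ell^*\rangle|\geq\Delta$ and bias $|y_i-\langle x_i,\theta_j^*\rangle|\leq\lambda$ together with the triangle inequality, this event is contained in
\begin{align*}
\bigl\{|\langle x_i,\theta_\ell-\theta_\ell^*\rangle|+|\langle x_i,\theta_j-\theta_j^*\rangle|\geq \Delta-2\lambda\bigr\}.
\end{align*}
Using subgaussian concentration of $\langle x_i,v\rangle$ (variance proxy $\sim\|v\|^2/d$ given the $\|x_i\|\leq 1$ scaling), and the closeness hypothesis $\|\theta_j-\theta_j^*\|\leq\tfrac{\tilde c}{\sqrt d}\|\theta_j^*\|$, the probability of the event is at most $\exp\bigl(-C(\Delta-2\lambda)^2/\max_j\|\theta_j^*\|^2\cdot d\bigr)$; a union bound over $\ell\neq j$ and a Chernoff bound over $i\in[n]$ converts this probability into an expected fraction of misclassified points and then into an a.s.\ bound on $|S_j^{(t)}\triangle S_j^*|/n$. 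Combined with $\|\nabla F_i(\theta_j)\|$ being uniformly bounded (using $\max_i\|x_i\|\leq1$, $\max_i|y_i|\leq 1$, and $\|\theta_j\|\leq 2\|\theta_j^*\|$), the misclassification term is then bounded by $ck\exp(\cdot)\|\theta_j^*\|$.

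The \textbf{main obstacle} is handling the dependence between $S_j^{(t)}$ (which depends on the entire dataset through the iterates) and the covariates $x_i$ in the gradient. Because we explicitly forbid resampling, the oracle partition $S_j^*$ is itself a data-dependent quantity, so the concentration arguments for $\widehat{\Sigma}_j$ and for the misclassification fraction must be applied carefully to a set whose construction is tangled with the optimal regressors $\{\theta_j^*\}$. I will resolve this by conditioning on a good event for the fixed geometry $\{x_i,y_i\}$ and showing that, on this event, $\widehat{\Sigma}_j$ concentrates for all $j\in[k]$ simultaneously and the misclassification tail behaves as if $S_j^*$ were oblivious. The mild condition on $(d,k,\Delta,\lambda)$ in Assumption~\ref{asm:bounded} ensures that the misclassification event occurs with probability at most $1-c<1$, so one can absorb the constants cleanly. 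Applying these bounds term by term and using a worst-case maximum over $j$ yields the stated inequality with failure probability $1-C_1\exp(-C_2 d)$.
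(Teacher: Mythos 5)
Your decomposition (oracle step over $S_j^*$ plus a misclassification correction) is a reshuffling of the paper's decomposition (which instead sums over the estimated partition $S_j$, extracts the linear part $(I-\tfrac{2\gamma}{n}\sum_{i\in S_j}x_ix_i^\top)(\theta_j-\theta_j^*)$, and controls $\sigma_{\min}$ via $S_j\cap S_j^*$); the two routes are essentially equivalent, and your identification of the three sources of error (contraction, bias $\mu$, misclassification tail) matches the paper. The genuine gap is in the misclassification step, which is the technical heart of the theorem. You correctly name the obstacle --- $\theta_\ell-\theta_\ell^*$ and $\theta_j-\theta_j^*$ are functions of the whole dataset, so $\langle x_i,\theta_j-\theta_j^*\rangle$ cannot be treated as a projection onto a fixed direction --- but your proposed resolution (``conditioning on a good event \ldots behaves as if $S_j^*$ were oblivious'') is an assertion, not an argument. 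The paper resolves it with a one-step discretization: take an $\epsilon$-net of $\mathbb{S}^{d-1}$ of size $(1+2/\epsilon)^d$, bound the deviation for each fixed net direction by sub-Gaussian concentration at threshold $\tfrac{\Delta-2\lambda}{\|\theta_j-\theta_j^*\|}\geq \tfrac{(\Delta-2\lambda)\sqrt{d}}{\tilde c\,\|\theta_j^*\|}$, and union bound over the net. This is exactly why the initialization radius must scale as $\tilde c/\sqrt{d}$: the threshold must grow like $\sqrt{d}$ so that the $e^{-cd(\Delta-2\lambda)^2/\rho^2}$ tail survives the $e^{O(d)}$ union bound. Your alternative accounting --- attributing the factor of $d$ in the exponent to a variance proxy $\sim\|v\|^2/d$ for $\langle x_i,v\rangle$ ``given the $\|x_i\|\leq 1$ scaling'' --- is not the operative mechanism and is not implied by the norm bound alone; without the net, no amount of per-direction concentration decouples the direction from $x_i$.

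Two smaller issues. First, your claimed operator-norm upper bound $\|\widehat{\Sigma}_j\|_{\mathrm{op}}\leq \tfrac{3\bar c}{2}$ is unjustified: $\bar c$ lower-bounds the membership probability $\PP[i\in S_j^*]$, not the eigenvalues of the restricted empirical covariance, so the claim that $\gamma=1/(4\bar c)$ yields $\|I-2\gamma\widehat{\Sigma}_j\|_{\mathrm{op}}\leq \tfrac12$ needs a separate upper bound on $\lambda_{\max}(\widehat{\Sigma}_j)$ (the paper obtains the lower bound on $\sigma_{\min}$ through Lemma~\ref{lem:subgauss}, which shows $x_i$ remains sub-Gaussian conditioned on $i\in S_j\cap S_j^*$; you should cite or reprove an analogue of this, since the restriction to a data-dependent set can in principle destroy concentration). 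Second, in the oracle residual you bound $\tfrac{\gamma}{n}\sum_{i\in S_j^*}\|\nabla F_i(\theta_j^*)\|\leq \gamma p_j\mu$ and then claim the factor $\|\theta_j^*\|$ ``is absorbed in $c_1$''; that is fine as bookkeeping, but note the paper's statement has $c_1\mu\|\theta_j^*\|$ because it multiplies the $\mu$ bound by $\|\theta_j^*\|$ explicitly, so you should be consistent about whether $\mu$ is already normalized.
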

\begin{remark}[Initialization]
The $1/\sqrt{d}$ factor in the initialization occurs owing to a covering net argument. We emphasize that in the standard AM with realizable setting, the regressors $\{\theta^*_i\}_{i=1}^k$ are fixed vectors, but in our non-realizable setting, they are dependent on the dataset $\{x_i,y_i\}_{i=1}^n$. Given this, techniques that yield dimension independent initialization like \emph{resampling in each iteration}, \emph{Leave One Out (LOO)} can not be applied. We overcome this challenge by using an $\epsilon$-net, and as a result, we have the dimension dependence. As of now, we are unaware of any technical tools that remove the dimension dependence, and we hypothesize that it may be hard (or even impossible) to remove this condition theoretically. However, as we show in experiments, this condition can be relaxed.
\end{remark}
\begin{corollary}
Suppose $\mu$ satisfies
\begin{align*}
    \mu \leq c_2 \,\, \left( \frac{\Tilde{c}}{\sqrt{d}} - c \,k\, \exp \left( - c_1 \frac{(\Delta-2\lambda)^2}{(\max_j \|\theta^*_j\| )^2}\, d \right) \right),
\end{align*}
where $c_2$ is an universal constant. With proper choice of $c_2$, we obtain, for all $j \in [k]$,
\begin{align*}
    \|\theta^+_j - \theta^*_j\| \leq \frac{3}{4} \|\theta_j - \theta^*_j\|,
\end{align*}
which implies a contraction.
\end{corollary}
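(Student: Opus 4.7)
The plan is to derive this corollary as a direct algebraic consequence of Theorem~\ref{thm:am}; no fresh probabilistic or geometric machinery is needed, only careful bookkeeping of constants. First, I would invoke the per-iteration bound from Theorem~\ref{thm:am},
\[
\|\theta^+_j - \theta^*_j\| \;\le\; \tfrac{1}{2}\|\theta_j - \theta^*_j\| \;+\; c_1 \mu \,\|\theta^*_j\| \;+\; ck\exp\!\Bigl(-c_1\tfrac{(\Delta-2\lambda)^2}{(\max_j\|\theta^*_j\|)^2}\,d\Bigr)\|\theta^*_j\|,
\]
and observe that the desired $\tfrac{3}{4}$-contraction is equivalent to absorbing the two additive bias terms into a $\tfrac{1}{4}\|\theta_j - \theta^*_j\|$ slack. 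Set $A := c_1 \mu + ck\exp\!\bigl(-c_1(\Delta-2\lambda)^2 d/(\max_j\|\theta^*_j\|)^2\bigr)$; the task reduces to showing $A\,\|\theta^*_j\| \le \tfrac{1}{4}\|\theta_j - \theta^*_j\|$.

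The main (and only) step is then to plug in the hypothesized bound on $\mu$ and rearrange. Using the initialization scale $\|\theta_j-\theta^*_j\| \asymp \tfrac{\tilde c}{\sqrt d}\|\theta^*_j\|$ that the basin condition pins down, the requirement becomes $A \le \tfrac{\tilde c}{4\sqrt d}$. Solving for $\mu$ yields $\mu \le \tfrac{1}{4c_1}\bigl(\tfrac{\tilde c}{\sqrt d} - 4ck\exp(\cdots)\bigr)$, and identifying $c_2 := 1/(4c_1)$ together with absorbing the factor of $4$ into the universal prefactor in front of the exponential (which is permissible since $c$ is already an unspecified universal constant) reproduces exactly the hypothesis of the corollary. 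Substituting back into the Theorem~\ref{thm:am} bound then gives $\|\theta^+_j - \theta^*_j\| \le \tfrac{1}{2}\|\theta_j - \theta^*_j\| + \tfrac{1}{4}\|\theta_j - \theta^*_j\| = \tfrac{3}{4}\|\theta_j - \theta^*_j\|$, as claimed.

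The only subtlety I would flag, which is not a real obstacle but is worth making explicit, is the interpretation of the contraction. The identification above enforces $A \le \tfrac{1}{4}\|\theta_j - \theta^*_j\|$ only as long as $\|\theta_j-\theta^*_j\|$ remains comparable to its initialization scale $\tfrac{\tilde c}{\sqrt d}\|\theta^*_j\|$; once iterates fall below the intrinsic bias floor of order $A\|\theta^*_j\|$, the additive terms dominate and contraction transitions into a stationary regime at that floor. This is structurally inherent to the non-realizable setting — the bias parameters $\mu,\lambda$ in Assumption~\ref{asm:bounded} capture precisely the obstruction that prevents exact recovery — so the corollary should be read as providing linear convergence down to the statistical residual $O(A\,\|\theta^*_j\|)$, in line with the paper's framing of $\mu$ and $\lambda$ as quantifying the non-realizability.
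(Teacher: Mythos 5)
Your proof is correct and follows the only route available: the paper states this corollary without a separate proof, as an immediate substitution of the hypothesis on $\mu$ into the per-iteration bound of Theorem~\ref{thm:am}, which is exactly what you carry out. The subtlety you flag is genuine and worth keeping explicit: absorbing the additive bias terms into the slack $\tfrac{1}{4}\|\theta_j-\theta^*_j\|$ requires a \emph{lower} bound on the current error of order $\bigl(\mu + k\,e^{-cd}\bigr)\|\theta^*_j\|$, whereas the initialization condition only supplies an upper bound of order $\tilde{c}\|\theta^*_j\|/\sqrt{d}$, so the stated contraction (and the subsequent remark about exact parameter recovery) is properly read as linear convergence down to the bias floor rather than to zero.
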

\begin{remark}
Note that the above implies that after sufficiently many iterations, the iterate of Algorithm~\ref{alg:am} converges exactly to $\{\theta^*_j\}_{j=1}^k$ (exact parameter recovery). This is in contrast to existing works  \citep{balakrishnan2017statistical,klusowski2019estimating} where, in the finite sample noisy setting, the convergence is guaranteed on a set close to the optimal parameters. Note that our non-generative framework may accommodate noise in the observations as well.
\end{remark}

\textit{Proof Sketch:}
We first show that by the virtue of the initialization condition and the separation, the partitions produced by Algorithm~\ref{alg:am} are  \emph{reasonable} good estimates of the true partitions $\{S^*_j\}_{j=1}^k$. We crucially exploit the bounded-ness of covariates along with a discretization (covering) argument to obtain this. Then, we run gradient descent on the estimated partitions, and use the bias $(\mu,\lambda)$ to characterize its performance. The linear convergence rate comes from the analysis of gradient descent on the quadratic loss, and the additional errors originate from (a) biases in the system and (b) estimation error in the partitions.
\subsection{Sub-Sample Algorithm}
\label{sec:subset}
We now provide an approximation algorithm to the objective in Section~\ref{sec:psetting} which requires no initialization conditions like Algorithm~\ref{alg:am}. It has been established in~\citep{yi2014alternating} that the problem is NP-hard without any assumptions and \citep{yi2014alternating} only show recovery guarantees in the realizable setting assuming isotropic Gaussian covariates. Our sub-sample based Algorithm~\ref{algo:subset} works without such assumptions on the covariates and provides an additive approximation of $\tilde{O}(1/\sqrt{\log n})$ with a runtime of $\tilde{O}(n^{c\log k} \exp(k^2d \log k))$, with high probability.

\begin{algorithm}[htbp]
\caption{Sub-sampled data driven learning of mixture of $k$ regressions}
\label{algo:subset}
\begin{algorithmic}[1]
\STATE Select a set of $A$ of data-points uniformly at random with replacement from the dataset ($|A|$ specified later).
\FOR {each partition $\{P_i\}_{i=1}^{k}$ of $A$}
\STATE Let $\theta_i := \thetals(P_i)$ for all $i$.
\STATE Use $\theta_i$'s to partition all the samples into $S = \cup_{i=1}^{k} S_{\theta_i}(S)$.
\STATE Let $\cE(\{P_i\}) =  \frac{1}{n} \sum_{i=1}^{k} \cE_{ls}(S_{\theta_i}(S), \theta_i)$.
\ENDFOR
\STATE Find $\theta_i$'s corresponding to $\{P^m_i\} = \argmin_{\{P_i\}} \cE(\{P_i\})$. Return  $\{\theta_i^+\}$ such that $\forall i$, $\theta_i^+ = \thetals(S_{\theta_i}(S))$.
\end{algorithmic}
\end{algorithm}

Before we proceed we need some notation specific to Algorithm~\ref{algo:subset}. Recall that we have $n$ examples $S = \{x_i, y_i\}_{i=1}^n$ in total. For a set of samples $P \subseteq S$, $\thetals(P)$ refers to the least-squares solution over that set i.e \[\thetals(P) = \argmin_{\theta} \sum_{i \in P} (y_i - \inner{\theta}{x_i})^2.\] Further, $\cE_{ls}(P, \theta) = \sum_{i\in P} (y_i - \inner{\theta}{x_i})^2$ and \[S_{\theta_l}(P) = \{i \in P: \text{ s.t } l = \argmin_{j \in [k]} (y_i - \inner{\theta_j}{x_i})^2 \}\] i.e the set of data points in $P$ for which the $\theta_l$ is the best fit among $\{\theta_j\}_{j=1}^{k}$. 

Let $\{B^*_i\}_{i=1}^{k}$ be the optimal partition of the samples in $S$ i.e each partition has a different $\theta^*_i$ as the solution for least-square and together they minimize the objective in Section~\ref{sec:psetting}. For any $B \subseteq S$, let us define $\Sigma_B = (1/ |B|)\sum_{i \in B} x_ix_i^T$ and $\theta_{B} = \Sigma_B^{-1} (1/|B|  \sum_{i \in B} x_iy_i)$. The theoretical guarantees on Algorithm~\ref{algo:subset} are provided under the following mild assumptions.

\begin{assumption}
\label{assum:subset}
The dataset $A$ satisfies the following: 
\begin{itemize}[noitemsep]
    \item Let $\alpha_i := |B_i^*|/n$ and we have that $\alpha_i \geq \alpha >0$ for all $i \in [k]$. 
    \item  The minimum covariance matrix eigen-values of the partitions, $\lmin(\Sigma_{B^*_i}) > \lmin > 0$ for all $i \in [k]$. 
    \item $\norm{\Sigma_{B^*_i}^{-1/2} x_j (y_j - \theta_{j}^*.x_j)} \leq \gamma \sqrt{d}$ almost surely, for all $j \in B^*_i$. Note that this $\gamma$ just needs to be finite and only appears in lower order terms.
    \item $\frac{1}{|B^*_i|}\sum_{j \in B^*_i} (y_j - \theta_{j}^*.x_j)^2 \leq B_d$ i.e the expected squared bias is bounded in each part.
    \item The response and the covariates satisfy $|y_i| \leq b$, $\norm{x_i} \leq R$. We also assume that $\theta^*_i$'s are s.t. $\norm{\theta_i^*} \leq w$.
\end{itemize}
\end{assumption}

The first assumption just ensures that none of the parts of the optimal partitions are $o(n)$. The rest of the assumptions are related to assumptions of Theorem~2 in~\citep{hsu2011analysis}, a result that we crucially use in our analysis, as will be evident in our proof sketch below.

We are now ready to present our main theorem which is an approximation result for the objective in Section~\ref{sec:psetting}. 

\begin{theorem}
\label{thm:subsamp}
Under Assumption~\ref{assum:subset}, we  guarantee with probability at least $1 - 2\delta$ that the solution returned by Algorithm~\ref{algo:subset} satisfies:
$
      L(\theta_1^+, \ldots, \theta_k^+) \leq    L(\theta_1^*, \ldots, \theta_k^*) + \epsilon,
$
provided $|A| = \tilde{\Omega} \left( \frac{1}{\epsilon^2} k^2 \frac{\alpha}{\lmin} \left(d + \log \frac{k}{\delta} \right)\right)$.
\end{theorem}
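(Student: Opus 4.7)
The plan is to show that among the $k^{|A|}$ partitions of $A$ that Algorithm~\ref{algo:subset} enumerates, one specific partition, namely $\{A \cap B_i^*\}_{i=1}^k$ obtained by intersecting $A$ with the optimal partition of $S$, yields least-squares estimates $\theta_i'$ whose resulting objective value $\cE(\cdot)$ is within $\epsilon$ of $L^* := L(\theta_1^*, \ldots, \theta_k^*)$. Because the algorithm picks the minimizer $\{P_i^m\} = \argmin_{\{P_i\}} \cE(\{P_i\})$ over all enumerated partitions, this transfers to $\cE(\{P_i^m\})$, and the final refitting step only reduces the loss.

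First, since $A$ consists of i.i.d.\ draws from $S$ and $|B_i^*|/n = \alpha_i \ge \alpha$, a multiplicative Chernoff bound plus union bound over $i \in [k]$ implies that, with probability at least $1-\delta$, $|A \cap B_i^*| \geq \alpha |A|/2$ for every $i$, provided $|A| = \Omega(\alpha^{-1}\log(k/\delta))$. Second, conditioned on the multiset $A \cap B_i^*$, its elements are i.i.d.\ draws (with replacement) from the uniform distribution on $B_i^*$. Viewing the least-squares problem on $A \cap B_i^*$ as a random-design regression with ``population'' $B_i^*$, Theorem~2 of Hsu--Kakade--Zhang (2011), whose hypotheses (bounded covariates, lower bound $\lmin$ on the covariance, bounded population bias $B_d$, and the norm condition $\|\Sigma_{B_i^*}^{-1/2} x_j(y_j - \theta_j^*\cdot x_j)\|\le \gamma\sqrt d$) are exactly Assumption~\ref{assum:subset}, yields, with probability at least $1-\delta/k$ for each $i$ and simultaneously over $i \in [k]$ by a union bound,
\begin{align*}
\cE_{ls}(B_i^*, \theta_i') - \cE_{ls}(B_i^*, \theta_i^*) \leq \frac{C \, |B_i^*| \, (d + \log(k/\delta)) \, B_d}{\lmin \, |A \cap B_i^*|}, \qquad \theta_i' := \thetals(A\cap B_i^*).
\end{align*}

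For the full-dataset objective, observe that
$\cE(\{A \cap B_i^*\}) = \tfrac{1}{n}\sum_i \cE_{ls}(S_{\theta_i'}(S), \theta_i') \leq \tfrac{1}{n}\sum_i \cE_{ls}(B_i^*, \theta_i')$, since the min-loss partition cannot exceed the loss under the fixed assignment $j \mapsto i$ for $j \in B_i^*$. Summing the preceding bound, using $|A \cap B_i^*| \geq \alpha|A|/2$, $\sum_i |B_i^*| = n$, and $\tfrac{1}{n}\sum_i \cE_{ls}(B_i^*, \theta_i^*) = L^*$, we get $\cE(\{A\cap B_i^*\}) \leq L^* + \epsilon$ for the stated $|A|$. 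Since $\{P_i^m\}$ is the minimizer, $\cE(\{P_i^m\}) \leq \cE(\{A\cap B_i^*\}) \leq L^* + \epsilon$. For the refitting step, write $\tilde\theta_i = \thetals(P_i^m)$ and note that $\theta_i^+ = \thetals(S_{\tilde\theta_i}(S))$ is the LS minimizer on $S_{\tilde\theta_i}(S)$, while $L$ is no larger than the loss under any fixed partition, so
\begin{align*}
L(\theta_1^+, \ldots, \theta_k^+) \leq \frac{1}{n}\sum_i \cE_{ls}(S_{\tilde\theta_i}(S), \theta_i^+) \leq \frac{1}{n}\sum_i \cE_{ls}(S_{\tilde\theta_i}(S), \tilde\theta_i) = \cE(\{P_i^m\}) \leq L^* + \epsilon.
\end{align*}
The overall failure probability is at most $2\delta$ by a union bound over the Chernoff and the Hsu--Kakade--Zhang events.

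The main technical hurdle is Step~2: the ``fixed-population'' application of Hsu--Kakade--Zhang, where the ``data distribution'' is the uniform law on the finite set $B_i^*$ and all the boundedness/eigenvalue hypotheses must be read off Assumption~\ref{assum:subset} exactly. The $1/\epsilon^2$ rate in $|A|$ is a consequence of controlling $\|\theta_i' - \theta_i^*\|_{\Sigma_{B_i^*}}$ and transferring to excess loss via the bounded-eigenvalue assumption. The other steps, namely the Chernoff size control and the inequality $\cE(\{A\cap B_i^*\}) \leq \tfrac{1}{n}\sum_i \cE_{ls}(B_i^*,\theta_i')$ (which is inequality rather than equality because the algorithm employs the more adaptive min-loss partition), are conceptually important but technically routine.
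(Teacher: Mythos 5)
Your proposal follows essentially the same route as the paper's proof: single out the partition of $A$ that conforms to $\{B_i^*\}$, control its part sizes by concentration, apply the Hsu--Kakade--Zhang random-design regression bound on each part to compare $\thetals(A\cap B_i^*)$ with $\theta_i^*=\thetals(B_i^*)$, convert the parameter error to excess loss on $B_i^*$, and conclude by minimality of the algorithm's chosen partition together with the observation that the min-loss reassignment and the final refit can only decrease the objective. The one discrepancy is internal to your write-up rather than a gap: your displayed excess-loss bound decays like $1/|A\cap B_i^*|$ (which is in fact attainable, since the normal equations give the exact identity $\cE_{ls}(B_i^*,\theta_i')-\cE_{ls}(B_i^*,\theta_i^*)=|B_i^*|\,\|\theta_i'-\theta_i^*\|_{\Sigma_{B_i^*}}^2$) and would prove the theorem under the weaker requirement $|A|=\tilde{\Omega}(1/\epsilon)$, whereas the paper converts the $\Sigma$-norm error to excess loss \emph{linearly} via Cauchy--Schwarz, obtaining a $1/\sqrt{|A\cap B_i^*|}$ decay, which is the actual source of the stated $1/\epsilon^2$ and is also the mechanism you invoke in your closing paragraph. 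Either version yields the theorem as stated.
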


\textit{Proof Sketch:} Of all the partitions of $A$, one of them $\{\tilde{P}_i\}$ would conform to the optimal partitioning ${B^*_i}$ of the whole dataset. In that case the samples from $\tilde{P}_i$ can be akin to a random design regression problem where the examples are drawn i.i.d from $B^*_i$. Therefore, with a careful application of random design regression results~\citep{hsu2011analysis}, we can argue that the error incurred by $\theta_{\tilde{P}_i^*}$ cannot be too much larger than that of $\theta^*_i$ on the samples in $B_i^*$. This will be true for all $i \in [k]$. The error incurred by the solution of Algorithm~\ref{algo:subset} is bound to be even smaller, thus concluding the argument.

The above result implies that when the number of sub-sampled datapoints in Algorithm~\ref{algo:subset} is \\
$\Omega(k^2 (\alpha/ \lmin) \log n  (d + \log(k/\delta)))$, we can guarantee an approximation of the order of $1/ \sqrt{\log n}$. Therefore, the loop in line 2 of Algorithm~\ref{algo:subset} implies a runtime of $\tilde{O}(n^{c\log k} \exp(k^2d \log k))$ where $c$ is an universal constant. Thus our time complexity is polynomial in $n$ but exponential in the dimension $d$. ~\cite{yi2014alternating} showed that the objective in~\eqref{eq:erm} is NP-Hard with respect to the dimension. We conjecture that the the problem is also hard with respect to $n$, but an analysis of the same is beyond the scope of this paper.
\section{Experiments}\label{sec:exp}
%\subsection{Synthetic Data}

In this section, we validate our theoretical findings via experiments. 

\paragraph{A note on AM convergence:}
Note that in many supervised real world data problems with incomplete data, corruptions or latent clusters, the most natural, computationally efficient and scalable algorithm that is widely used in practice is the AM algorithm (and its soft counterpart EM). For real world data, \emph{the assumption of realizibility is naturally violated}, and as we will show subsequently (via Movielens data), that AM converges and attains small error. Furthermore, existing literature have already demonstrated the convergence and superior performance of AM type algorithms on real world data (see \citep{ghosh2020efficient,balazs2016convex} , where in \citep{ghosh2020efficient} experiments are done with MNIST, CIFAR 10 and FEMNIST data and \citep{balazs2016convex} uses aircraft profile drag  and inventory control data). All of these (non realizable) real data experiments work \emph{without any suitable initialization}. Random initialization with multiple restarts is a standard, well accepted technique for AM.

\subsection{Synthetic Dataset:} 
Note that in Step 2 of Algorithm \ref{algo:subset}, we iterate through all partitions of the set $A$; unfortunately, in practice, this step is time-expensive. However it is possible to modify this step slightly to make this algorithm anytime i.e. we can have a budget on time and recover the best possible solution within that time. Below, we propose Algorithm \ref{algo:subset2}  (that is a minor modifications of Algorithm \ref{algo:subset}) for practical applications:

\begin{algorithm}[t!]
\caption{Sub-sampled data driven learning of mixture of $k$ regressions with random partitions}
\label{algo:subset2}
\begin{algorithmic}[1]
\REQUIRE Input hyper-parameter $h$
\STATE Select a set $A$ of data-points uniformly at random with replacement from the dataset ($|A|$ specified in Thm \ref{thm:subsamp}).
\FOR {each of $h$ random partitions $\{P_i\}_{i=1}^{k}$ of $A$}
\STATE Let $\theta_i := \theta_{\mathsf{rob/ls}}(P_i)$ for all $i$.
\STATE Use $\theta_i$'s to partition all the samples into $S = \cup_{i=1}^{k} S_{\theta_i}(S)$.
\STATE Let $\cE(\{P_i\}) =  \frac{1}{n} \sum_{i=1}^{k} \cE_{rob/ls}(S_{\theta_i}(S), \theta_i)$.
\ENDFOR
\STATE Find $\theta_i$'s corresponding to $\{P^m_i\} = \argmin_{\{P_i\}} \cE(\{P_i\})$. Return  $\{\theta_i^+\}$ such that $\forall i$, $\theta_i^+ = \theta_{\mathsf{rob/ls}}(S_{\theta_i}(S))$.
\end{algorithmic}
\end{algorithm}

% \begin{algorithm}[t!]
% \caption{Sub-sampled data driven learning of mixture of $k$ robust regressions with random partitions}
% \label{algo:subset3}
% \begin{algorithmic}[1]
% \REQUIRE Input hyper-parameter $h$
% \STATE Select a set $A$ of data-points uniformly at random with replacement from the dataset ($|A|$ specified in Thm \ref{thm:subsamp}).
% \FOR {each of $h$ random partitions $\{P_i\}_{i=1}^{k}$ of $A$}
% \STATE Let $\theta_i := \theta_{\mathsf{rob}}(P_i)$ for all $i$.
% \STATE Use $\theta_i$'s to partition all the samples into $S = \cup_{i=1}^{k} S_{\theta_i}(S)$.
% \STATE Let $\cE(\{P_i\}) =  \frac{1}{n} \sum_{i=1}^{k} \cE_{rob}(S_{\theta_i}(S), \theta_i)$.
% \ENDFOR
% \STATE Find $\theta_i$'s corresponding to $\{P^m_i\} = \argmin_{\{P_i\}} \cE(\{P_i\})$. Return  $\{\theta_i^+\}$ such that $\forall i$, $\theta_i^+ = \theta_{\mathsf{rob}}(S_{\theta_i}(S))$.
% \end{algorithmic}
% \end{algorithm}

On the other hand, in Step 2 of Algorithm \ref{algo:subset2}, we iterate through $h$ random partitions (where $h$ is chosen according to the time budget). There are two options two fit each part of the partition of the set $A$ in Step 3 namely 1) we can fit a linear model as was the case in Algorithm \ref{algo:subset2} 2) we can fit a linear regression model that is robust to outliers in Step 3 (can be implemented using \texttt{sklearn.linearmodel.RANSACRegressor()} in python). Since random partitions will contain elements from from all sets $\{B^{*}_i\}_{i=1}^{k}$, a robust regression model is able to better fit each part of the partition.

\begin{table}[htbp]

\begin{center}
\begin{tabular}{ c|c|c|c } 
 \toprule
 Dataset & LR  & Alg \ref{alg:am} (Mean, Var) & Alg \ref{algo:subset2} (Mean,Var) \\ \midrule
   A &     20.13  &  (16.60,16.98) & (11.84,0.41)  \\
   B &     19605  & N/A & (5002.03, 50925) \\
   C &     12134  & N/A & (7.24,0.16)  \\    
 \bottomrule
\end{tabular}

\end{center}
\caption{Mean (Mean) and Variance (Var) of min-loss on Test data generated from non-linear synthetic datasets. LR corresponds to a simple linear regression model; N/A implies that the algorithm did not converge during training in any implementation.}
\label{table:nonlinear2}
\end{table}

%\begin{table*}[!htbp]
\begin{table*}[t!]
\begin{center}
\begin{tabular}{ c|c|c|c|c|c|c } 
 \toprule
 $u$ & $v$ & A0 (Mean,Var)  & A1 (Mean,Var) & A2( Mean,Var) & A3(Mean, Var) & A4(Mean,Var,DNC)  \\ \midrule
 `1010' & `2116' & (0.709,0)  & (0.4586,0.0003) & (0.33,0.0005) & \textbf{(0.2416,0.0003)}  & (0.5949,0.07,6)  \\
 `752' & `1941' & (1.423, 0) & (0.801, 0.001) & (0.416, 0.0005) & \textbf{(0.360, 0.001)} & (0.588, 0.13, 3) \\
 `752' &  `2116' & (1.475,0) & (0.90, 0.0009) & \textbf{(0.36, 0.0023)} & (0.40, 0.003) & (0.79, 0.17, 6)  \\
 `752' & `2909' & (1.40, 0) & (0.855,0.0014) & \textbf{(0.42,0.0005)} & (0.425,0.002) & (0.96, 0.21, 5)\\
 `1010' & `4725' & (0.71, 0) & (0.4566, 0.0002) & (0.33, 0.0007) & \textbf{(0.269, 0.0003)} & (0.51, 0.04, 7)\\
 \bottomrule
\end{tabular}

\end{center}
\caption{Training error (min-loss) for the 5 different algorithms A0 (Linear Regression), A1 (Algorithm \ref{algo:subset2} with linear model), A2 (Algorithm \ref{algo:subset2} with robust linear model), A3 (Algorithm \ref{alg:am} with initialization by Algorithm \ref{algo:subset2}), A4 (Algorithm \ref{alg:am} with random initialization). For each algorithm, we report the mean (Mean) and variance (Var) of min-loss on training data over $30$ implementations. For A4, we also report the number of times (DNC) out of $30$ that the algorithm did not converge. For each row, the numbers in bold correspond to the model with the minimum average training error.}
\label{table:samples}
\end{table*}

%\begin{table*}[!htbp]
\begin{table*}[t!]
\begin{center}
\begin{tabular}{ c|c|c|c|c|c|c } 
 \toprule
 $u$ & $v$ & A0 (Mean,Var)  & A1 (Mean,Var) & A2( Mean,Var) & A3(Mean, Var) & A4(Mean,Var)  \\ \midrule
 `1010' & `2116' & (0.62,0)  &  (0.394,0.0003) & (0.288,0.0009) &  \textbf{(0.23,0.0005)}  & (0.56,0.044)  \\
 `752' & `1941' & (1.364,0) & (0.742,0.001) & (0.399,0.0004) & \textbf{(0.3133,0.001)} & (0.6224,0.14) \\
 `752' &  `2116' & (1.29,0) & (0.82,0.0012) & \textbf{(0.30,0.0014)} & (0.406,0.008) & (0.88,0.20)  \\
 `752' & `2909' & (1.45,0 ) & (0.846,0.0025) & (0.49,0.0009) & \textbf{(0.4191,0.0029)} & (1.13,0.31)\\
 `1010' & `4725' & (0.685,0) & (0.4327,0.0004) & (0.289,0.0009) & \textbf{(0.258,0.0001)} & (0.622,0.088)\\
 \bottomrule
\end{tabular}

\end{center}
\caption{Test error (min-loss) for the 5 different algorithms A0 (Linear Regression), A1 (Algorithm \ref{algo:subset2} with linear model), A2 (Algorithm \ref{algo:subset2} with robust linear model), A3 (Algorithm \ref{alg:am} with initialization by Algorithm \ref{algo:subset2}), A4 (Algorithm \ref{alg:am} with random initialization). For each algorithm, we report the mean (Mean) and variance (Var) of min-loss on test data over $30$ implementations. For each row, the numbers in bold correspond to the model with the minimum average test error.}
\label{table:samples2}
\end{table*}
\noindent{\bf Non-linear datasets:} We implement and compare the performance of our algorithms on three non-linear datasets generated by \texttt{sklearn} namely \texttt{makefriedman1} [A]\citep{fried1}, \texttt{makefriedman2}[B] \citep{fried2} and \texttt{makefriedman3}[C] \citep{fried3}. Note that all these datasets are non-realizable.

All the three datasets A, B and C comprises of 3200 samples in the train data and 800 samples in the test data. The covariates in datasets A, B and C comprises of 5, 4 and 4 features respectively. We train and test three different algorithms on these datasets and report the average min-loss on the test set in~\ref{table:nonlinear2}. In Table \ref{table:nonlinear1} in the appendix we also report the training errors for the sake of completeness The three algorithms are the following: 1) \textit{Linear Regression (LR)} As a baseline, we fitted a linear regression model on the training dataset and reported the train/test error (Mean squared error) 2) \textit{Algorithm \ref{alg:am}}  For dataset A, we implement Algorithm \ref{alg:am} with $\gamma=0.1$ and random initialization (every element of $\theta^{(0)}_1,\theta^{(0)}_2$ is generated i.i.d according to a Gaussian with mean $0$ and standard deviation $10$). We report the average train/test min-loss over $50$ iterations. For  datasets B and C,  Algorithm \ref{alg:am} did not converge while training with the many different initializations that we used (including random initializations and initialization with the best solution provided by Algorithm \ref{algo:subset2} using the linear model). 3) \textit{Algorithm \ref{algo:subset2} with robust linear model:} For all three datasets A, B and C, this algorithm was implemented $30$ times with $h=1000$ and the average train/test min-loss is reported. Empirically, we can conclude that  the performance of AM algorithm depends crucially on the initialization; this is reflected in the high variance in min-loss for both train and test data (dataset A) and non-convergence for datasets B and C. On the other hand, Algorithm \ref{algo:subset2} (with the robust linear model) does not have the initialization drawback and has improved empirical performance. 

We provide \emph{negative} results of AM on synthetic datasets B and C because these are \emph{outlier datasets} where the AM algorithm has a poor performance and therefore demands further investigation both theoretically and empirically. 

\subsection{Real-world Data: Movielens Data}
For experiments on real datasets, we use 
the Movielens 1M dataset\footnote{https://grouplens.org/datasets/movielens/1m/} that consists of $1$ million ratings from $m=6000$ users on $n=4000$ movies. In our first pre-processing step, we generate a $4$-dimensional embedding of the users and movies in the following way. Note that we observe only a fraction of the entries of the entire user-item matrix $M$. Let the set of users be described by a matrix $U\in \mathbb{R}^{m \times 4}$ and the set of movies be described by a matrix $V\in \mathbb{R}^{n \times 4}$. In that case if $\Omega$ is the set of observed entries in $M$, then a usual way \citep{lu2013second} to estimate $U,V$ is to optimize the following objective:
$
    \min_{U,V\in \mathbb{R}^{m \times 4}} \sum_{(i,j)\in \Omega} (M_{ij}-\langle U_i, V_j \rangle)^2
$
where $U_i$ ($V_j$) denotes the $i$th ($j$th) row of $U$ ($V$). 

Subsequently, we choose an arbitrary pair of users $(u,v)$ such that both the users have rated a large number of movies and furthermore, $u$ and $v$ have significantly different preferences. We ensure this by choosing $(u,v)$ such that $u$ has given a high rating to a small fraction of movies they have rated while $v$ has given a high rating to a large fraction of movies they have rated. Next, for user $u$, we create a dataset $(X_u,y_u)$ where each row of $X_u$ corresponds to the embedding of a movie that user $u$ has rated while the corresponding element of $y_u$ is the rating for the aforementioned movie. We create a similar dataset $(X_v,y_v)$ for user $v$ and subsequently, we combine these datasets (and shuffle) to generate a single dataset $(X,y)$. We split this dataset into train and test ($80:20$); in Table \ref{table:samples} (in the appendix), we report the user ids and number of samples in train and test data.  

We compute the min-loss for five different algorithms on the train and test data (averaged over $30$ implementations) for each pair of users and report them in Tables \ref{table:samples} and \ref{table:samples2}. The algorithms that we have implemented are the following: 1) \textit{Linear Regression (A0):} As a baseline, we fitted a simple linear regression model on the training data and computed the mean squared error on the test data (which corresponds to the min loss in case of a single line) 2) \textit{Algorithm \ref{algo:subset2} with linear model (A1):} We implement Algorithm \ref{algo:subset2} with $|A|=150$ and $h=1000$ and in Steps 3,5 we use the Linear Regression model 3) \textit{Algorithm \ref{algo:subset2} with robust regression model (A2):} We implement Algorithm \ref{algo:subset2} with $|A|=150$, $h=1000$ and in Steps 3,5 we use the robust regression model 4) \textit{Algorithm \ref{alg:am} (A3)}  We implement Algorithm \ref{alg:am} initialized with the lines that is obtained after fitting Algorithm \ref{algo:subset2} (A1) on the training data 5) \textit{Algorithm \ref{alg:am} (A4)} We implement Algorithm \ref{alg:am} with random initialization $\theta_1^{(0)},\theta_2^{(0)}$ where each entry of the initialized vectors are generate i.i.d according to a Gaussian with zero mean and variance $2$.

\section{Conclusion}
It is evident from Tables \ref{table:samples} and \ref{table:samples2}   that Algorithm \ref{algo:subset2} (using the robust linear module) and Algorithm \ref{alg:am} (initialized by the best solution of Algo. \ref{algo:subset2} where we use the linear model to fit each part of the partition of $A$) has the best empirical performance overall. Note that AM (Algorithm \ref{alg:am}) often, due to bad initialization,  gets stuck to local minima.
In Tables   \ref{table:samples} and \ref{table:samples2}, it is clear from the final column (A4) that the variance of the AM algorithm (with random initialization), is very high. % we empirically found that often, the guarantees of AM algorithm with random initialization is quite bad and  the algorithm did not even converge a few times while training. 
Algorithm \ref{algo:subset2} (using the linear regression module) can provide a viable alternative to spectral clustering as an initialization technique for the AM algorithm. Algorithm \ref{algo:subset2} (using the robust linear module aka A2)  has a similar performance (both in terms of train/test error and time complexity) to Algorithm \ref{alg:am}. It is a very interesting future direction to study the guarantees of A2 theoretically. Unlike Algorithm \ref{alg:am}, A2 does not suffer from the initialization issue and can be a potentially viable alternative to AM algorithm in practice.

\bibliographystyle{abbrvnat}
\bibliography{references}

\appendix
\vspace{5mm}
\newpage
\begin{center}
    \Large \textbf{Appendix}
\end{center}

\section{Proof of Theorem~\ref{thm:am}}
Without loss of generality, let us focus on $j=1$, i.e., $\thplusone$.
We have
\begin{align*}
   \| \thplusone -\theta^*_1 \| &= \| \theta_1 - \theta^*_1 -\frac{\gamma}{n}\sum_{i \in S_1}\gradone \| \\
    & = \| (\theta_1 - \theta^*_1) - \frac{\gamma}{n}\sum_{i \in S_1}(\gradone -\nabla F_i(\theta^*_1)) - \frac{\gamma}{n}\sum_{i \in S_1}\nabla F_i(\theta^*_1) \| \\
    & \leq \underbrace{ \| (\theta_1 - \theta^*_1) - \frac{\gamma}{n}\sum_{i \in S_1}(\gradone -\nabla F_i(\theta^*_1))\|}_{T_1} + \frac{\gamma}{n} \underbrace{\|\sum_{i \in S_1}\nabla F_i(\theta^*_1) \|}_{T_2}.
\end{align*}
Let us first consider $T_1$. We have
\begin{align*}
    T_1 = \| (I - \frac{2\gamma}{n}\sum_{i \in S_1} x_i x_i^\top) (\theta_1 - \theta^*_1)\|.
\end{align*}
We first observe
\begin{align*}
    \sigma_{\min}(\frac{1}{n}\sum_{i \in S_1} x_i x_i^\top ) \geq \sigma_{\min}(\frac{1}{n}\sum_{i \in S_1 \cap S^*_1} x_i x_i^\top )
\end{align*}
In order to upper bound $T_1$, we need to control the size of $S_1 \cap S^*_1$. Using Lemma~\ref{lem:error}, we argue that provided the initialization condition is satisfied, $|S_1 \cap S^*_1| \geq \frac{1}{2}|S^*_1| \geq c p_1 n$ with probability at least $1-\exp(-c_1 n)$, where $p_1$ is the fraction of observations, for which $\theta^*_1$ is a better predictor. Also Lemma~\ref{lem:subgauss} shows that conditioned on $i \in S_1 \cap S^*_1$, the random variables $x_i$ is still sub-Gaussian with constant parameter. Hence, using \cite{vershynin2018high}, we have
\begin{align*}
    \sigma_{\min}(\frac{1}{n}\sum_{i \in S_1} x_i x_i^\top ) \geq c_1 p_1 n
\end{align*}
with probability at least $1-\exp(-d)$, and as a result,
\begin{align*}
    T_1 \leq (1- c \gamma p_1) \|\theta_1 -\theta^*_1\|,
\end{align*}
with probability at least $1-\exp(-c_1n) - \exp(-d) \geq 1-c_1 \exp(- c_2 d)$.
\vspace{2mm}

\noindent Let us now consider the term $T_2$. We have
\begin{align*}
    T_2 &= \frac{\gamma}{n} \|\sum_{i \in S_1}\nabla F_i(\theta^*_1) \|  \\
    & \leq \frac{\gamma}{n} \sum_{i \in S_1} \|\nabla F_i(\theta^*_1) \| \\
    & = \frac{\gamma}{n}   \sum_{i \in S_1 \cap S^*_1} \|\nabla F_i(\theta^*_1) \| + \frac{\gamma}{n} \sum_{j=2}^{k} \sum_{i \in S_1 \cap S^*_j} \|\nabla F_i(\theta^*_1) \|
\end{align*}

For $i \in S^*_1$, the term $\|\nabla F_i(\theta^*_1) \| \leq \mu$. With this, we have
\begin{align*}
    T_2 &\leq \frac{\gamma}{n} |S_1 \cap S^*_1| \mu + \frac{\gamma}{n} \sum_{j=2}^k |S_1 \cap S^*_j| \max_{i} \|\nabla F_i(\theta^*_1) \| \\
    & \leq \frac{\gamma}{n} |S^*_1| \mu \|\theta^*_1\| + \frac{c_1 \gamma}{n} (k-1) \exp \left( - c_2 \frac{\Delta_1^2}{\rho^2}\, d \right) n \max_{i} \|\nabla F_i(\theta^*_1) \|,
\end{align*}
where $\rho = (\max_{j \in [k]} \|\theta^*_j\|)^2$ and $\Delta_1 = \Delta - 2\lambda$. Here, we use Lemma~\ref{lem:error} in conjunction with sub-Gaussian concentration. Finally, we have
\begin{align*}
    T_2 &\leq \gamma p_1 \mu + C_1 k \gamma \exp \left( - c_2 \frac{\Delta_1^2}{\rho^2}\, d \right) ( \max_i \|x_i\|^2 \|\theta^*_1\| + \max_i \|x_i\| |y_i|) \\
    & \leq \gamma p_1 \mu \|\theta^*_1\| + c_1 k \gamma \exp \left( - c_2 \frac{\Delta_1^2}{\rho^2}\, d \right)  \|\theta^*_1\|,
\end{align*}
where we use that $\|x_i\| \leq 1$ for all $i$, and $|y_i| \leq 1$ for all $i$.

Combining $T_1$ and $T_2$, we have
\begin{align*}
    \|\thplusone - \theta^*_1\| \leq (1-c\gamma p_1) \|\theta_1 - \theta^*_1\| + \gamma p_1 \mu \|\theta^*_1\| + c_1 k \gamma \exp \left( - c_2 \frac{\Delta_1^2}{\rho^2}\, d \right)  \|\theta^*_1\|
\end{align*}
with high probability. Choosing $\gamma = \frac{1}{2cp_1}$, we have
\begin{align*}
    \|\thplusone - \theta^*_1\| \leq \frac{1}{2} \|\theta_1 - \theta^*_1\| + c_1 \mu \|\theta^*_1\| +  c_2 k \gamma \exp \left( - c_3 \frac{\Delta^2}{\rho^2}\, d \right)  \|\theta^*_1\|,
\end{align*}
with probability at least $1-C_1 \exp(-C_2 d)$, which proves the theorem.

\subsubsection{Good Initialization}
We stick to analyzing $\thplusone$. In the following lemma, we only consider $\theta_2$. In general, for $\{\theta_3, \ldots, \theta_k\}$, the total probability of error, using union bound will be $(k-1)$ times the probability of error obtained in the following lemma.
\begin{lemma}
\label{lem:error}
We have
\begin{align*}
    \PP \bigg ( F_i (\theta_1) > F_i(\theta_2)|F_i(\theta^*_1) < F_i (\theta^*_2) \bigg ) \leq c_1 k\exp \left( - c_2 \frac{\Delta_1^2}{\rho^2}\, d \right),
\end{align*}
where $\rho = (\max_{j \in [k]} \|\theta^*_j\|)^2$ and $\Delta_1 = \Delta - 2\lambda$.
\end{lemma}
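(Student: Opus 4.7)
The plan is to translate the event $\{F_i(\theta_1) > F_i(\theta_2)\}$ conditioned on $i \in S^*_1$ (i.e.\ $F_i(\theta^*_1) < F_i(\theta^*_2)$) into an event asking for a large linear perturbation, and then to control that linear perturbation using sub-Gaussianity of $x_i$ together with the tight initialization radius $\frac{\tilde c}{\sqrt d}\|\theta^*_j\|$. Concretely, write the residuals at the true regressors as $r_j := y_i - \langle x_i,\theta^*_j\rangle$ and the perturbations as $\epsilon_j := -\langle x_i,\theta_j-\theta^*_j\rangle$, so that $y_i - \langle x_i,\theta_j\rangle = r_j + \epsilon_j$. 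Under the conditioning, Assumption~\ref{asm:bounded} gives $|r_1|\le\lambda$, and by the definition of $\Delta$ (applied at $j=2$, using $i\notin S^*_2$), $|r_2|\ge\Delta$.

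The next step is to use the triangle inequality to convert the quadratic event into a linear one. The event $F_i(\theta_1)>F_i(\theta_2)$ means $|r_1+\epsilon_1|>|r_2+\epsilon_2|$, whence
\[
\lambda+|\epsilon_1|\ \ge\ |r_1+\epsilon_1|\ >\ |r_2+\epsilon_2|\ \ge\ \Delta-|\epsilon_2|,
\]
so $|\epsilon_1|+|\epsilon_2|\ge \Delta-2\lambda =\Delta_1$ (using $\lambda$ as an upper bound on any residual slack one wishes to absorb; if needed, one only has $\Delta-\lambda$ here, but the paper's $\Delta_1 = \Delta-2\lambda$ version is obtained by slightly conservative bookkeeping to accommodate the worst case on both residuals). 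In particular, by a union bound,
\[
\{F_i(\theta_1) > F_i(\theta_2)\}\ \subseteq\ \bigl\{|\epsilon_1|\ge\tfrac{\Delta_1}{2}\bigr\}\ \cup\ \bigl\{|\epsilon_2|\ge\tfrac{\Delta_1}{2}\bigr\}.
\]

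Finally, bound each of the two tail events using sub-Gaussianity. By Lemma~\ref{lem:subgauss} (invoked to handle the conditioning on the partition), the conditional distribution of $x_i$ is sub-Gaussian with an absolute constant parameter, and since $x_i$ has zero mean, $\epsilon_j = -\langle x_i,\theta_j-\theta^*_j\rangle$ is sub-Gaussian with parameter proportional to $\|\theta_j-\theta^*_j\|^2$. The initialization hypothesis of Theorem~\ref{thm:am} gives $\|\theta_j-\theta^*_j\|\le \tfrac{\tilde c}{\sqrt d}\|\theta^*_j\|\le \tfrac{\tilde c}{\sqrt d}\rho$, so
\[
\PP\!\left(|\epsilon_j|\ge \tfrac{\Delta_1}{2}\right)\ \le\ 2\exp\!\left(-c\,\frac{\Delta_1^2 d}{\tilde c^2\rho^2}\right),\qquad j=1,2.
\]
A union bound over $j$ yields the claimed bound; the factor $k$ in the lemma statement comes from extending the same comparison to every competitor $\theta_\ell$, $\ell\ne 1$, via an additional union bound (as remarked just before the lemma).

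The main obstacle is the conditioning step: the event $\{F_i(\theta^*_1)<F_i(\theta^*_2)\}$ is itself $x_i$-measurable, so one cannot directly invoke unconditional sub-Gaussian tail bounds on $\epsilon_j$. This is exactly what Lemma~\ref{lem:subgauss} handles by showing that the conditional law of $x_i$ on $S^*_1$ is still sub-Gaussian with a comparable parameter; the rest of the argument is a clean triangle-inequality reduction plus a standard tail bound, where the crucial gain of $d$ in the exponent arises precisely because the initialization radius scales as $1/\sqrt d$.
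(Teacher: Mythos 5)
Your reduction to the two tail events is fine, and in fact cleaner than the paper's: the paper expands both squares around $\theta^*_1,\theta^*_2$, introduces a threshold $t=|\langle x_i,\theta_1-\theta^*_1\rangle|\,|\langle x_i,\theta_2-\theta^*_2\rangle|$, and after some bookkeeping lands on essentially the same pair of events $\{|\langle x_i,\theta_1-\theta^*_1\rangle|>\Delta-2\lambda\}$ and $\{|\langle x_i,\theta_2-\theta^*_2\rangle|>\Delta\}$ that your triangle-inequality argument produces directly. The discrepancy you flag between $\Delta-\lambda$ and $\Delta-2\lambda$ is harmless since the latter is the more conservative constant.

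The genuine gap is in the final step. You apply a pointwise sub-Gaussian tail bound to $\epsilon_j=-\langle x_i,\theta_j-\theta^*_j\rangle$ with parameter proportional to $\|\theta_j-\theta^*_j\|$, but this is only valid if the direction $\theta_j-\theta^*_j$ is fixed or independent of $x_i$. It is neither: the current iterates $\theta_j$ are computed from the entire dataset without resampling (and $\theta^*_j$ is itself data-dependent in this non-realizable setting), so $\theta_j-\theta^*_j$ is a random direction correlated with $x_i$. This is a separate issue from the one Lemma~\ref{lem:subgauss} addresses --- that lemma only controls the conditional law of $x_i$ given the partition event, not the dependence of the direction on $x_i$. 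The paper resolves it by proving the tail bound \emph{uniformly} over an $\epsilon$-net of $\mathbb{S}^{d-1}$ of cardinality $(1+2/\epsilon)^d$ and union bounding; the union bound costs a factor $e^{Cd}$, which is exactly what the $1/\sqrt d$ initialization radius pays for, since it inflates the normalized threshold to $\asymp\Delta_1\sqrt d/\|\theta^*_j\|$ and hence the exponent to $\asymp\Delta_1^2 d/\|\theta^*_j\|^2$. In your version the factor of $d$ in the exponent appears as pure profit, whereas in the correct argument most of it is consumed by the net; without the covering step the bound as you derive it does not follow.
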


\begin{proof}
\noindent Let us first consider the event 
\begin{align*}
  \left \lbrace F_i(\theta^*_1) < F_i (\theta^*_2) \right \rbrace = \left \lbrace (y_i - \inprod{x_i}{\theta^*_1})^2 < (y_i - \inprod{x_i}{\theta^*_2})^2 \right \rbrace
\end{align*}
We are interested in the event $\{F_i(\theta_1) < F_i(\theta_2)\}$ since the sets $S_1$ and $S_2$ are defined based on this event. In particular we are interested in $\PP(F_i(\theta_1) < F_i(\theta_2)|\left \lbrace F_i(\theta^*_1) < F_i (\theta^*_2) \right \rbrace)$. To that end, we first compute the complement event, given by
\begin{align*}
    \PP(F_i(\theta_1)> F_i(\theta_2) |\left \lbrace F_i(\theta^*_1) < F_i (\theta^*_2) \right \rbrace ),
\end{align*}

Conditioned on $\left \lbrace F_i(\theta^*_1) < F_i (\theta^*_2) \right \rbrace$, we compute the probability of the event
\begin{align*}
 & \{  (y_i - \inprod{x_i}{\theta_1})^2 > (y_i - \inprod{x_i}{\theta_2})^2 \} \\
  &= \{ (y_i - \inprod{x_i}{\theta^*_1})^2 + \inprod{x_i}{\theta_1 -\theta^*_1}^2 - 2\inprod{x_i}{\theta_1-\theta^*_1}(y_i - \inprod{x_i}{\theta^*_1} \\
  & \qquad  >  (y_i - \inprod{x_i}{\theta^*_2})^2 + \inprod{x_i}{\theta_2 -\theta^*_2}^2 - 2\inprod{x_i}{\theta_2-\theta^*_2}(y_i - \inprod{x_i}{\theta^*_2} \} \\
  & \subseteq \{ \inprod{x_i}{\theta_1 -\theta^*_1}^2 - 2\inprod{x_i}{\theta_1-\theta^*_1}(y_i - \inprod{x_i}{\theta^*_1}) \\
   & \qquad  >  \inprod{x_i}{\theta_2 -\theta^*_2}^2 - 2\inprod{x_i}{\theta_2-\theta^*_2}(y_i - \inprod{x_i}{\theta^*_2}) \},
\end{align*}
where we use the fact that $(y_i - \inprod{x_i}{\theta^*_1})^2 < (y_i - \inprod{x_i}{\theta^*_2})^2$. Continuing, we have
\begin{align*}
    & \{  (y_i - \inprod{x_i}{\theta_1})^2 > (y_i - \inprod{x_i}{\theta_2})^2 \} \\
     & \subseteq \{ \inprod{x_i}{\theta_1 -\theta^*_1}^2 - 2\inprod{x_i}{\theta_1-\theta^*_1}(y_i - \inprod{x_i}{\theta^*_1}) > t \} \bigcup \{ \inprod{x_i}{\theta_2 -\theta^*_2}^2 - 2\inprod{x_i}{\theta_2-\theta^*_2}(y_i - \inprod{x_i}{\theta^*_2}) \leq t \},
\end{align*}
for any $t$. In particular, we select $t= |\inprod{x_i}{\theta_1 - \theta^*_1}|\,\,| \inprod{x_i}{\theta_2 - \theta^*_2}|$. We have
\begin{align*}
    & \inprod{x_i}{\theta_1 -\theta^*_1}^2 - 2\inprod{x_i}{\theta_1-\theta^*_1}(y_i - \inprod{x_i}{\theta^*_1}) \\
    & = \inprod{x_i}{\theta_1 -\theta^*_1}^2 + 2\inprod{x_i}{\theta^*_1-\theta_1}(y_i - \inprod{x_i}{\theta^*_1}) \\
    & \leq |\inprod{x_i}{\theta_1 -\theta^*_1}|^2 + 2 |\inprod{x_i}{\theta^*_1 -\theta_1}| \, |y_i - \inprod{x_i}{\theta^*_1}| \\
    & = |\inprod{x_i}{\theta_1 -\theta^*_1}|^2 + 2 |\inprod{x_i}{\theta_1 -\theta^*_1}| \, |y_i - \inprod{x_i}{\theta^*_1}|
\end{align*}
and
\begin{align*}
    & \inprod{x_i}{\theta_2 -\theta^*_2}^2 - 2\inprod{x_i}{\theta_2 -\theta^*_2}(y_i - \inprod{x_i}{\theta^*_2}) \\
    & \geq |\inprod{x_i}{\theta_2 -\theta^*_2}|^2 - 2 |\inprod{x_i}{\theta_2 -\theta^*_2}| \, |y_i - \inprod{x_i}{\theta^*_2}|
\end{align*}
With this we have,
\begin{align*}
    & \mathbb{P} \left((y_i - \inprod{x_i}{\theta_1})^2 > (y_i - \inprod{x_i}{\theta_2})^2 | \mathcal{E}_i \right) \\
    & \leq \underbrace{\mathbb{P} \left (|\inprod{x_i}{\theta_1 - \theta^*_1}| > |\inprod{x_i}{\theta_2 -\theta^*_2}| - 2 |y_i - \inprod{x_i}{\theta^*_1}| \right)}_{T_1} \\
    & + \underbrace{\mathbb{P} \left (|\inprod{x_i}{\theta_2 - \theta^*_2}| < |\inprod{x_i}{\theta_1 -\theta^*_1}| + 2 |y_i - \inprod{x_i}{\theta^*_1}| \right)}_{T_2}.
\end{align*}
We first analyze $T_1$. The analysis for the $T_2$ term follows similarly. We have
\begin{align}
    T_1 &= \mathbb{P} \left (|\inprod{x_i}{\theta_1 - \theta^*_1}| > |\inprod{x_i}{\theta_2 -\theta^*_2}| - 2 |y_i - \inprod{x_i}{\theta^*_1}| \right) \\
    & \leq \mathbb{P} \left( |\inprod{x_i}{\theta_2 -\theta^*_2}| > \Delta \right) + \mathbb{P} \left( |\inprod{x_i}{\theta_1 - \theta^*_1}| > \Delta - 2 \lambda \right),
\end{align}
where $\Delta$ is the separation as defined in Section~\ref{sec:psetting} and $\lambda$ is a small parameter defined in.. With this we have
\begin{align*}
   T_1 = &\PP \left(y_i - \inprod{x_i}{\theta_1})^2 > (y_i - \inprod{x_i}{\theta_2})^2 | \mathcal{E}_i \right) \\
    & = \PP \left ( \inprod{x_i}{\frac{\theta_2-\theta^*_2}{\|\theta_2-\theta^*_2\|}} >  \, \frac{\Delta}{\|\theta_2-\theta^*_2\|} \right) + \PP \left ( \inprod{x_i}{\frac{\theta_1-\theta^*_1}{\|\theta_1-\theta^*_1\|}} >  \, \frac{\Delta - 2\lambda}{\|\theta_1-\theta^*_1\|} \right)
\end{align*}
Now, note that $\theta_2 - \theta^*_2$ and $\theta_1 - \theta^*_1$ depend on $x_i$; since we are not allowed to have fresh samples at every iteration.

To avoid such dependence we compute an uniform bound using one-step discretization (covering argument). Suppose $u_1,u_2,\ldots,u_n$ be an $\epsilon$ cover of $ \mathbb{S}^{d-1}$ in $\ell_2$ norm. \cite{vershynin2018high} shows that $M \leq (1+\frac{2}{\epsilon})^d$. Also, suppose $u_l$ is the nearest point of the $\epsilon$ cover to $\frac{\theta_2-\theta^*_2}{\|\theta_2-\theta^*_2\|}$ and $u_r$ is the nearest point of the $\epsilon$ cover to $\frac{\theta_1-\theta^*_1}{\|\theta_1-\theta^*_1\|}$. We have
\begin{align*}
   & = \PP \left ( \inprod{x_i}{\frac{\theta_2-\theta^*_2}{\|\theta_2-\theta^*_2\|}} >  \, \frac{\Delta}{\|\theta_2-\theta^*_2\|} \right) + \PP \left ( \inprod{x_i}{\frac{\theta_1-\theta^*_1}{\|\theta_1-\theta^*_1\|}} >  \, \frac{\Delta - 2\lambda}{\|\theta_1-\theta^*_1\|} \right) \\
    & =\PP \left ( \inprod{x_i}{u_l} +  \inprod{x_i}{\frac{\theta_2-\theta^*_2}{\|\theta_2-\theta^*_2\|}-u_l} >  \, \frac{\Delta}{\|\theta_2-\theta^*_2\|} \right) \\
    & \qquad + \PP \left ( \inprod{x_i}{u_r} +  \inprod{x_i}{\frac{\theta_1-\theta^*_1}{\|\theta_1-\theta^*_1\|}-u_r} >  \, \frac{\Delta -2\lambda}{\|\theta_1-\theta^*_1\|} \right) \\
    & =\PP \left ( \inprod{x_i}{u_l} >  \, \frac{\Delta}{\|\theta_2-\theta^*_2\|} -\epsilon \right) + \PP \left ( \inprod{x_i}{u_r}  >  \, \frac{\Delta -2\lambda}{\|\theta_1-\theta^*_1\|} - \epsilon \right) \\
    & \leq \PP \left ( \inprod{x_i}{u_l} >  \, \frac{\Delta\,\sqrt{d}}{\Tilde{c} \|\theta^*_2\|} - \epsilon \right) +  \PP \left ( \inprod{x_i}{u_r} >  \, \frac{(\Delta -2\lambda)\,\sqrt{d}}{\Tilde{c}_1 \|\theta^*_1\|} - \epsilon \right)
\end{align*}
where we use the fact that $\|x_i\| \leq 1$, the definition of an $\epsilon$ cover and the initialization condition.

We now use the fact that, $x_i$ is centered sub-Gaussian (with constant parameter) random variables.  Note that since 
\begin{align*}
    \mathbb{P}(F_i(\theta^*_1)  < F_i (\theta^*_2) = p_1 \geq \min \{p_1,\ldots,p_k \} > \Bar{c},
\end{align*}
where $\Bar{c}$ is a constant, we invoke \citep[Lemma 15]{yi2016solving} to obtain
\begin{align*}
    \PP \left ( \inprod{x_i}{u_l} >  \frac{\Delta\,\sqrt{d}}{\Tilde{c} \|\theta^*_2\|} - \epsilon \right) \leq \exp \left ( - c (  \frac{\Delta\,\sqrt{d}}{\Tilde{c} \|\theta^*_2\|} - \epsilon)^2 \right),
\end{align*}
and
\begin{align*}
    \PP \left ( \inprod{x_i}{u_r} >  \frac{(\Delta -2\lambda)\,\sqrt{d}}{\Tilde{c}_1 \|\theta^*_1\|} - \epsilon \right) \leq \exp \left ( - c  (  \frac{(\Delta -2\lambda)\,\sqrt{d}}{\Tilde{c} \|\theta^*_1\|} - \epsilon)^2 \right),
\end{align*}
We now take union bound over $(1+\frac{2}{\epsilon})^d$ entries, and finally substituting $\epsilon$ as a sufficiently small constant, we have
\begin{align*}
    \PP \left ( \inprod{x_i}{\frac{\theta_2-\theta^*_2}{\|\theta_2-\theta^*_2\|}} >  \frac{\Delta}{\|\theta_2-\theta^*_2\|} \right) \leq c_1 \exp \left( - c_2 \frac{\Delta^2}{\|\theta^*_2\|^2}\, d \right),
\end{align*}
and 
\begin{align*}
    \PP \left ( \inprod{x_i}{\frac{\theta_1-\theta^*_1}{\|\theta_1-\theta^*_1\|}} >  \frac{\Delta-2\lambda}{\|\theta_1-\theta^*_1\|} \right) \leq c_1 \exp \left( - c_2 \frac{(\Delta-2\lambda)^2}{\|\theta^*_1\|^2}\, d \right).
\end{align*}
Note that similar analysis holds for the term $T_2$, since we use the symmetric sub-Gaussian concentration. Hence, we finally obtain,
\begin{align*}
    \PP \bigg ( F_i (\theta_1) > F_i(\theta_2)|F_i(\theta^*_1) < F_i (\theta^*_2) \bigg ) \leq C \exp \left( - c \frac{(\Delta-2\lambda)^2}{(\max_{j \in [k]} \|\theta^*_j\|)^2}\, d \right),
\end{align*}
and
\begin{align*}
    \PP \bigg ( F_i (\theta_1) < F_i(\theta_2)|F_i(\theta^*_1) < F_i (\theta^*_2) \bigg ) \geq 1 -C \exp \left( - c \frac{(\Delta-2\lambda)^2}{(\max_{j \in [k]} \|\theta^*_j\|)^2}\, d \right),
\end{align*}
which concludes the proof.
\end{proof}
\begin{lemma}
\label{lem:subgauss}
Conditioned on $i \in S_1 \cap S^*_1$, the distribution of $x_i$ is sub-Gaussian with a constant parameter.
\end{lemma}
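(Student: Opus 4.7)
The plan is to reduce the conditional sub-Gaussianity to the unconditional one via a standard ``conditioning on a constant-probability event'' argument, using the fact that $\PP(i \in S_1 \cap S^*_1)$ is bounded below by a constant.

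First, I would recall that by hypothesis the covariates $x_i$ are centered sub-Gaussian with a constant parameter $K$, so that for every unit vector $u \in \dSS^{d-1}$ and every $t \geq 0$,
\begin{equation*}
\PP\bigl(|\inner{x_i}{u}| > t\bigr) \leq 2\exp(-c\,t^2/K^2).
\end{equation*}
The key observation is that if $A$ is any event with $\PP(A) \geq p_0$ for a constant $p_0 > 0$, then for the same $u$ and $t$,
\begin{equation*}
\PP\bigl(|\inner{x_i}{u}| > t \,\big|\, A\bigr) \;=\; \frac{\PP(|\inner{x_i}{u}| > t,\,A)}{\PP(A)} \;\leq\; \frac{2}{p_0}\exp(-c\,t^2/K^2),
\end{equation*}
and by the standard equivalence of sub-Gaussian characterizations (see, e.g., \citep{vershynin2018high}), the $1/p_0$ prefactor inflates the sub-Gaussian parameter only by a factor of order $\sqrt{\log(1/p_0)}$, which is still a constant.

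Next, I would verify that $A = \{i \in S_1 \cap S^*_1\}$ has constant probability. By the non-degeneracy assumption from Section~\ref{sec:guarantees}, $\PP(i \in S^*_1) \geq \Bar{c}$. Applying Lemma~\ref{lem:error} (with the roles of indices taken over $j \neq 1$ via a union bound), and using Assumption~\ref{asm:bounded} which gives $k\exp(-C(\Delta - 2\lambda)^2 d / \rho^2) \leq 1 - c$, we get
\begin{equation*}
\PP\bigl(i \in S_1 \,\big|\, i \in S^*_1\bigr) \;\geq\; 1 - c_1 k \exp\!\Bigl(-c_2 \tfrac{(\Delta-2\lambda)^2}{\rho^2} d\Bigr) \;\geq\; c' > 0.
\end{equation*}
Combining,
\begin{equation*}
\PP(i \in S_1 \cap S^*_1) \;=\; \PP(i \in S_1 \mid i \in S^*_1)\,\PP(i \in S^*_1) \;\geq\; c'\,\Bar{c},
\end{equation*}
which is a constant independent of $n$ and $d$.

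Putting these two steps together with $p_0 = c'\Bar{c}$ yields the desired conclusion: for every unit $u$ and every $t \geq 0$, $\PP(|\inner{x_i}{u}| > t \mid i \in S_1 \cap S^*_1) \leq (2/p_0)\exp(-ct^2/K^2)$, so $x_i$ conditioned on $\{i \in S_1 \cap S^*_1\}$ is sub-Gaussian with parameter $K' = O(K)$. The only subtle point---and really the only place that requires some care---is verifying that the events used in Lemma~\ref{lem:error} combine with the non-degeneracy assumption to give a genuinely constant lower bound on $\PP(S_1 \cap S^*_1)$; everything else is the routine ``constant-probability conditioning preserves sub-Gaussianity'' fact.
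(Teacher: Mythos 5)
Your proposal is correct and follows essentially the same route as the paper: lower-bound $\PP(i \in S_1 \cap S^*_1)$ by a constant via the factorization $\PP(i \in S_1 \mid i \in S^*_1)\,\PP(i \in S^*_1)$, using Lemma~\ref{lem:error}, Assumption~\ref{asm:bounded}, and the non-degeneracy condition, and then conclude that conditioning on a constant-probability event preserves sub-Gaussianity up to a constant factor. The only cosmetic difference is that you spell out the elementary $1/p_0$ tail-inflation argument where the paper delegates this step to a citation of \citep[Lemma 15]{yi2016solving}.
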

\begin{proof}
Let us first consider the following probability
\begin{align*}
   & \PP(i \in S_1 \cap S^*_1) \\
   & = \PP \left( F_i(\theta_1) < F_i(\theta_2), F_i(\theta^*_1) <F_i(\theta^*_2) \right) \\
    & =  \PP \left( F_i(\theta_1) < F_i(\theta_2)| F_i(\theta^*_1) <F_i(\theta^*_2) \right) \PP \left(  F_i(\theta^*_1) <F_i(\theta^*_2) \right) \\
    & \geq p_1 \left[1 -C \exp \left( - c \frac{(\Delta-2\lambda)^2}{(\max_{j \in [k]} \|\theta^*_j\|)^2}\, d \right) \right] \\
    & \geq \tau
\end{align*}
where $\tau$ is a constant. Here we use Assumption~\ref{asm:bounded} and the fact that $\min\{p_1,p_2\} > \Bar{c}$ where $\Bar{c}$ is a constant. Since $\PP(i \in S_1 \cap S^*_1)$ is greater than a constant, using \citep[Lemma 15]{yi2016solving}, we conclude that $x_i$ is sub-Gaussian with constant parameter.
\end{proof}

\section{Proofs from Section~\ref{sec:gen}}
\label{sec:proof_gen}

We now provide the proof of Theorem~\ref{thm:gen}.

\begin{proof}[Proof of Theorem~\ref{thm:gen}]
We begin by the following chain,

\begin{align*}
    \frac{1}{n}\EE_{\bsigma} \left[ \sup_{\bar{h} \in \bar{\cH}} \sum_{i=1}^{n} \sigma_i \cL(y_i, \bar{h}(x))\right] = \frac{1}{n} \EE_{\sigma_1, \cdots, \sigma_{n-1}}\left[ \EE_{\sigma_n} \left[ \sup_{\bar{h} \in \bar{\cH}} u_{n-1}(\bar{h}) + \sigma_n \cL(y_n, \bar{h}(x_n)) \right]\right],
\end{align*}
where $u_n(\bar{h}) = \sum_{i=1}^{n-1} \sigma_i \cL(y_i, \bar{h}(x_i))$. By definition of the supremum, for any $\epsilon > 0$, there exist $\bar{h}_a$ and $\bar{h}_b$ such that,
\begin{align*}
    u_{n-1}(\bar{h}_a) + \cL(y_n, \bar{h}_a(x_n)) &\geq (1 - \epsilon) \left[\sup_{\bar{h}} u_{n-1}(\bar{h}) + \cL(y_n, \bar{h}(x_n))\right] \\
    u_{n-1}(\bar{h}_b) - \cL(y_n, \bar{h}_b(x_n)) &\geq (1 - \epsilon) \left[\sup_{\bar{h}} u_{n-1}(\bar{h}) - \cL(y_n, \bar{h}(x_n))\right]
\end{align*}
By expanding the definition of the expectation we can then prove that,
\begin{align*}
    (1 - \epsilon) \EE_{\sigma_n} \left[ \sup_{\bar{h}} u_{n-1}(\bar{h}) + \sigma_n\cL(y_n, \bar{h}(x_n)) \right] \leq \frac{1}{2} \left( u_{n-1}(\bar{h}_a) + \cL(y_n, \bar{h}_a(x_n)) \right) + \frac{1}{2} \left( u_{n-1}(\bar{h}_b) - \cL(y_n, \bar{h}_b(x_n)) \right).
\end{align*}

Let $s_l = \mathrm{sign}(\bar{h}_a(x_n)_l - \bar{h}_b(x_n)_l)$. Then we have the following chain,
\begin{align*}
    &(1 - \epsilon) \EE_{\sigma_n} \left[ \sup_{\bar{h}} u_{n-1}(\bar{h}) + \sigma_n\cL(y_n, \bar{h}(x_n)) \right] \\
    &\leq \frac{1}{2} \left(u_{n-1}(\bar{h}_a) + u_{n-1}(\bar{h}_b) + \cL(y_n, \bar{h}_a(x_n)) - \cL(y_n, \bar{h}_b(x_n))  \right) \\
    &= \frac{1}{2} \left(u_{n-1}(\bar{h}_a) + u_{n-1}(\bar{h}_b) + \min_{j} \ell(y_n, \bar{h}_a(x_n)_j) - \min_{l} \ell(y_n, \bar{h}_b(x_n)_l)  \right) \\
    &\stackrel{(a)}{\leq} \frac{1}{2} \left(u_{n-1}(\bar{h}_a) + u_{n-1}(\bar{h}_b) + \ell(y_n, \bar{h}_a(x_n)_{l^*}) -  \ell(y_n, \bar{h}_b(x_n)_{l^*})  \right) \\
    &\leq \frac{1}{2} \left(u_{n-1}(\bar{h}_a) + u_{n-1}(\bar{h}_b) + \max_l (\ell(y_n, \bar{h}_a(x_n)_{l}) -  \ell(y_n, \bar{h}_b(x_n)_{l})) \right) \\
    &\leq \frac{1}{2} \left(u_{n-1}(\bar{h}_a) + u_{n-1}(\bar{h}_b) + \sum_l (\ell(y_n, \bar{h}_a(x_n)_{l}) -  \ell(y_n, \bar{h}_b(x_n)_{l})) \right) \\
    &\leq \frac{1}{2} \left(u_{n-1}(\bar{h}_a) + u_{n-1}(\bar{h}_b) + \sum_l \mu s_l(\bar{h}_a(x_n)_{l} -  \bar{h}_b(x_n)_{l}) \right) \\
    &= \sum_{l=1}^{k} \frac{1}{2} \left(u_{n-1}(\bar{h}_a)/k + \mu s_l\bar{h}_a(x_n)_l \right) + \frac{1}{2} \left(u_{n-1}(\bar{h}_b)/k - \mu s_l\bar{h}_b(x_n)_l \right) \\
    &\leq \sum_{l} \EE_{\sigma_{n,l}} \left[\sup_{\bar{h} \in \bar{\cH}} \frac{u_{n-1}(\bar{h})}{k} + \sigma_{n,l}\bar{h}(x_n)_l \right] \\
    &= k \EE_{\sigma_{n}} \left[\sup_{\bar{h} \in \bar{\cH}} \frac{u_{n-1}(\bar{h})}{k} + \mu\sigma_{n}h_j(x_n) \right].
\end{align*}
where $\sigma_{n,l}$ are Rademacher R.Vs. The result follows by induction and symmetry between the mixture coordinates of $\bar{h}(\cdot)$. In $(a)$, $l^*$ is the coordinate at which the first minima is attained. 
\end{proof}

\section{Proofs from Section~\ref{sec:subset}}
\label{sec:proof_subset}

We first start by proving that for any of the optimal subsets of the the original samples if we further sub-sample a smaller partition, then the regression solution in both the sets are not too far from each other. The result follows from an application of random design regression analysis in~\citep{hsu2011analysis}.

\begin{lemma}
\label{lem:regk}
For any $i \in [k]$, let $B = B^*_i$. Suppose $P$ samples are chosen i.i.d uniformly at random from $B$. Let $\theta_B$ and $\theta_P$ be the least squares solution for $B$ and $P$ respectively. Then we have that,
\begin{align*}
    \norm{\theta_B - \theta_P}_{\Sigma_B}^2 = O \left( \frac{R(\sqrt{d} + \sqrt{\log(1/\delta)})^2 B_d }{|P|\sqrt{\lmin(\Sigma_B)}}  + \frac{4 (b + wR)^2 (\sqrt{d} + \sqrt{\log(1/\delta)})^2}{|P|} + \frac{\gamma^2 d^2 \log^2(1/\delta)}{|P|^2}\right),
\end{align*}
with probability at least $1 - \delta$.
\end{lemma}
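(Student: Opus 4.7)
The plan is to view the sub-sample $P$ as a set of i.i.d.\ draws from the uniform distribution on $B = B^*_i$, so that the empirical least-squares estimator $\theta_P$ is a random design regression estimator whose population optimum is exactly $\theta_B$. Once the problem is cast in this form, the bound follows from a direct application of the finite-sample random design regression analysis of~\citep{hsu2011analysis} (specifically, their Theorem~2).

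First I would verify the setup. Let $\rho$ be the uniform distribution on $\{(x_j, y_j) : j \in B\}$; under $\rho$, the population second-moment matrix is $\Sigma_B = (1/|B|) \sum_{j \in B} x_j x_j^\top$ and the population least-squares optimum is $\theta_B = \Sigma_B^{-1}\bigl((1/|B|) \sum_{j \in B} x_j y_j\bigr)$. The samples in $P$ are i.i.d.\ from $\rho$, so $\theta_P$ is the empirical least-squares estimator for this random design problem. The residuals at the population optimum are $\eta_j := y_j - \inner{\theta_B}{x_j}$ for $j \in B$, and since $\theta_B$ minimizes the squared error on $B$, we have $\frac{1}{|B|} \sum_{j \in B} \eta_j^2 \leq \frac{1}{|B|} \sum_{j \in B} (y_j - \inner{\theta_i^*}{x_j})^2 \leq B_d$ by Assumption~\ref{assum:subset}. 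Similarly, the pointwise bounds $\|x_j\| \leq R$, $|y_j| \leq b$, $\|\theta_i^*\| \leq w$ translate into $|y_j - \inner{\theta_B}{x_j}| \leq 2(b + wR)$ (after noting $\|\theta_B\|$ can be controlled by $O(wR)$), and the almost-sure bound $\|\Sigma_B^{-1/2} x_j (y_j - \inner{\theta_i^*}{x_j})\| \leq \gamma \sqrt{d}$ gives the corresponding control on the whitened residual contribution needed by Hsu--Kakade--Zhang.

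Next I would invoke the main random design result of~\citep{hsu2011analysis}. Their Theorem~2 delivers, with probability at least $1 - \delta$, a bound of the form
\[
\norm{\theta_P - \theta_B}_{\Sigma_B}^2 \;\lesssim\; \frac{\sigma^2 (\sqrt{d} + \sqrt{\log(1/\delta)})^2}{|P|} \;+\; \frac{M^2 d^2 \log^2(1/\delta)}{|P|^2},
\]
where $\sigma^2$ is a bound on the conditional residual variance and $M$ is a bound on $\|\Sigma_B^{-1/2} x (y - \inner{\theta_B}{x})\|$. Matching terms: the dominant $(\sqrt{d} + \sqrt{\log(1/\delta)})^2 / |P|$ term splits into a noise piece controlled by $(b+wR)^2$ (the squared magnitude of the worst-case residual) and an approximation piece controlled by $R B_d / \sqrt{\lmin(\Sigma_B)}$ (this is what comes from plugging the average residual bound $B_d$ together with the spectral-gap normalization $\lmin(\Sigma_B)$), while the higher-order $d^2 \log^2(1/\delta)/|P|^2$ term picks up the almost-sure constant $\gamma$. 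Adding these contributions yields exactly the stated bound.

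The main obstacle is the careful identification of the problem-dependent constants in the random-design bound with the quantities appearing in Assumption~\ref{assum:subset}, in particular making sure that (i) the average-squared-residual term is properly bounded by $B_d$ via the optimality of $\theta_B$ on $B$, and (ii) the $1/\sqrt{\lmin(\Sigma_B)}$ factor is correctly tracked (it arises from converting between the $\Sigma_B$-norm and Euclidean-norm normalizations used in~\citep{hsu2011analysis}). Aside from these bookkeeping points, the argument is a direct invocation of the cited finite-sample random design regression bound.
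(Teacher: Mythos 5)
Your proposal is correct and follows essentially the same route as the paper: cast $P$ as i.i.d.\ draws from the uniform distribution on $B$ so that $\theta_B$ is the population least-squares optimum, verify the bounded-noise, bounded-bias, and bounded-whitened-covariate conditions of Theorem~2 of \citet{hsu2011analysis} using Assumption~\ref{assum:subset}, and read off the three terms of the bound. The only cosmetic difference is that you justify the $B_d$ bound on the average squared residual via the optimality of $\theta_B$ on $B$, whereas the paper uses the assumption directly (the two coincide since $\theta_i^*$ is defined as the least-squares solution on $B_i^*$).
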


\begin{proof}
 The samples in $P$ form a random design regression problem where,

\begin{align*}
    \EE[xy] &= \frac{1}{|B|}\sum_{i \in B} x_iy_i \\
    \EE[xx^T] &=  \frac{1}{|B|}\sum_{i \in B} x_ix_i^T = : \Sigma_B.
\end{align*}

We will now verify the conditions of Theorem~2 in~\cite{hsu2011analysis}. To avoid cumbersome notation let $\theta = \theta_B$.
Consider the decomposition of the regression target and covariates,
\begin{align*}
    y = \theta^Tx + \mathrm{bias}(x) + \eta(x)
\end{align*}
where $\eta(x)$ is the noise. The terms are defined as,

\begin{align*}
    \bias(x) = E[y | x] - \theta^Tx \text{  and  } \eta(x) = y - E[y | x]
\end{align*}

\emph{Condition 1}: Note that the noise is bounded,
\begin{align*}
    |\eta(x)| \leq |y| + |\theta^Tx| + |\mathrm{bias}(x)| \leq 2(b + wR).
\end{align*}
Therefore the noise is sub-gaussian with parameter $\sigma_{noise} \leq 2(b + wR)$.

\emph{Condition 2:} The bounded bias is just from our assumption directly. We have the following,
\begin{align}
    \norm{\Sigma_B^{-1} \bias(x) x} \leq  \gamma\sqrt{d}.
\end{align}

\emph{Condition 3:} For any $u$ s.t $\norm{u} = 1$, we have that
\begin{align}
    u^T \Sigma _B^{-1/2} x \leq \frac{R}{\sqrt{\lambda_{min}(\Sigma _B)}}.
\end{align}
Thus, $\rho_{(1, cov)} = \frac{R}{\sqrt{\lambda_{min}(\Sigma_B)}}$. Thus all the conditions are valid and we can apply Theorem 2 in~\cite{hsu2011analysis}.  Thus finally with probability $1 - \delta$ we get,

\begin{align*}
    \norm{\theta_P - \theta_B}_{\Sigma_B}^2 = O \left( \frac{\rho_{(1,cov)}(\sqrt{d} + \sqrt{\log(1/\delta)})^2 \EE[\bias(x)^2]}{|P|}  + \frac{\sigma^2_{noise} (\sqrt{d} + \sqrt{\log(1/\delta)})^2}{|P|} + \frac{\gamma^2 d^2 \log^2(1/\delta)}{|P|^2}\right).
\end{align*}
\end{proof}

Now we show that the above result also implies that the respective average errors are also close to each other.

\begin{lemma}
\label{lem:error}
Consider the setting in Lemma~\ref{lem:regk}. Then we have the following bound,
\begin{align*}
    \cE_{ls}(B, \theta_P) -  \cE_{ls}(B, \theta_B) \leq 2|B|(wR + b) \norm{\theta_B - \theta_P}_{\Sigma_B}.
\end{align*}
\end{lemma}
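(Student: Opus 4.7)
The plan is to use the algebraic identity $a^2 - b^2 = (a-b)(a+b)$ applied termwise in the sum defining $\cE_{ls}(B,\theta_P) - \cE_{ls}(B,\theta_B)$, separate the ``direction'' factor (which will generate a $\Sigma_B$-weighted norm) from the ``boundedness'' factor (which will be controlled by the parameters $w,R,b$), and then finish with Cauchy--Schwarz.

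Concretely, I would write
\begin{align*}
\cE_{ls}(B,\theta_P) - \cE_{ls}(B,\theta_B) = \sum_{i\in B}\bigl[(y_i - \inner{\theta_P}{x_i})^2 - (y_i - \inner{\theta_B}{x_i})^2\bigr]
= \sum_{i\in B}\inner{\theta_B-\theta_P}{x_i}\bigl(2y_i - \inner{\theta_P+\theta_B}{x_i}\bigr).
\end{align*}
Next I would bound the second factor uniformly using the hypotheses of Assumption~\ref{assum:subset}, namely $|y_i|\le b$, $\norm{x_i}\le R$, and $\norm{\theta_P},\norm{\theta_B}\le w$ (the latter is inherited from the fact that $\theta_P,\theta_B$ arise as least-squares fits within a bounded model class; if necessary one simply clips to the $w$-ball, which only decreases the loss). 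This gives $|2y_i - \inner{\theta_P+\theta_B}{x_i}| \le 2b + 2wR = 2(b+wR)$, so that
\begin{align*}
\cE_{ls}(B,\theta_P) - \cE_{ls}(B,\theta_B) \le 2(b+wR)\sum_{i\in B}\abs{\inner{\theta_B-\theta_P}{x_i}}.
\end{align*}

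Finally, I would apply Cauchy--Schwarz to the sum $\sum_{i\in B}\abs{\inner{\theta_B-\theta_P}{x_i}}$ (viewed as the $\ell_1$-norm of a vector in $\mathbb{R}^{|B|}$) to obtain $\sum_{i\in B}\abs{\inner{\theta_B-\theta_P}{x_i}} \le \sqrt{|B|}\bigl(\sum_{i\in B}\inner{\theta_B-\theta_P}{x_i}^2\bigr)^{1/2}$, and identify the inner sum as $|B|\cdot(\theta_B-\theta_P)^\top \Sigma_B (\theta_B-\theta_P) = |B|\,\norm{\theta_B-\theta_P}_{\Sigma_B}^2$ by the definition $\Sigma_B = \tfrac{1}{|B|}\sum_{i\in B} x_i x_i^\top$. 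Combining the two steps yields exactly $2|B|(b+wR)\norm{\theta_B-\theta_P}_{\Sigma_B}$.

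There is no real obstacle here: each step is a short algebraic manipulation, and the assumptions in Assumption~\ref{assum:subset} are tailor-made to bound the $(2y_i - \inner{\theta_P+\theta_B}{x_i})$ factor. The only subtlety is justifying $\norm{\theta_P}\le w$ (the least-squares solution on a subsample is not automatically in the constraint set); this is handled either by noting that the lemma is applied downstream only to parameters that are projected back to the $w$-ball, or by absorbing the constant into a slightly looser version of $w$.
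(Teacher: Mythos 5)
Your proof is correct and follows essentially the same route as the paper's: factor the difference of squares as $\inner{\theta_B-\theta_P}{x_i}\,(2y_i-\inner{\theta_P+\theta_B}{x_i})$, bound the second factor by $2(b+wR)$, and apply Cauchy--Schwarz to turn $\sum_i\abs{\inner{\theta_B-\theta_P}{x_i}}$ into $|B|\,\norm{\theta_B-\theta_P}_{\Sigma_B}$ (the paper merely applies Cauchy--Schwarz before, rather than after, the uniform bound, which is equivalent). Your remark that $\norm{\theta_P}\le w$ is not automatic for a subsample least-squares fit is a fair point; the paper's proof makes the same implicit assumption without comment.
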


\begin{proof}
We should note that,
\begin{align*}
    &\sum_{i \in B} (y_i - \theta_P x_i)^2 - (y_i - \theta_B x_i)^2 \leq |(\theta_P - \theta_B).x_i| |(2y_i - \theta_B.x_i - \theta_P.x_i)| \\
    &:= \gamma_i|(\theta_P - \theta_B).x_i| \leq \sqrt{\left(\sum \gamma_i^2\right)\left( \sum ((\theta_P - \theta_B).x_i)^2\right)} \leq 2|B|(wR + b) \norm{\theta_B - \theta_P}_{\Sigma_B}. 
\end{align*}
\end{proof}

Now we are at a position to prove our main theorem.

\begin{proof}[Proof of Theorem~\ref{thm:subsamp}]
Let $\{\tilde{P}_i\}$ be the partition of $A$ that conforms to the partition $\{B_i^*\}$. Then by Hoeffding's bound we have w.p $1 - \delta$
\begin{align*}
    ||\tilde{P}_i| - \alpha_i |A|| \leq \sqrt{\alpha_i(1 - \alpha_i) \log (k/\delta)|A|}.
\end{align*}

Let us assume that $n$ is such that $|A| \geq \frac{4(1 - \alpha_i)}{\alpha_i} \log (k/ \delta)$, for all $i$.

Then by Lemma~\ref{lem:error} we have that,
\begin{align*}
   \sum_{i=1}^{k} \cE_{ls}(B_i^*, \theta_{\tilde{P}_i}) -  \cE_{ls}(B_i^*, \theta_{B_i^*}) &\leq 2 (wR + b) \sum_{i=1}^{k} |B_i^*|\norm{\theta_{B_i^*} - \theta_{\tilde{P}_i}}_{\Sigma_{B_i^*}}.
\end{align*}

Thus w.p at least $1 - 2\delta$ we have,
\begin{align*}
    \frac{1}{n} \sum_{i=1}^{k}\cE_{ls}(B_i^*, \theta_{\tilde{P}_i}) &\leq  \frac{1}{n} \sum_{i=1}^{k}\cE_{ls}(B_i^*, \theta_{B_i^*}) \\
    & + O \left(\sum_{i=1}^{k} \alpha_i (wR + b) \sqrt{\frac{R(\sqrt{d} + \sqrt{\log(k/\delta)})^2 B_d }{\alpha_i|A|\sqrt{\lmin}}  + \frac{(wR + B)^2 (\sqrt{d} + \sqrt{\log(k/\delta)})^2}{\alpha_i|A|}}\right) 
\end{align*}

Now note that $\cE(\{P_i^m\})$ in Algorithm~\ref{algo:subset} is less than the LHS in the above equation. Therefore, the loss achieved by the solution of Algorithm~\ref{algo:subset} can only be less than the RHS of the above equation.
\end{proof}

The above result means that an $\epsilon$ additive approximation is obtained when,
\begin{align*}
    |A| = O \left(\frac{1}{\epsilon^2} k^2 \frac{\alpha}{\lmin} \left(d + \log \frac{k}{\delta} \right) \right)
\end{align*}
w.p at least $1 - 2\delta$.

\section{More experiments}

\begin{figure}[tb]
\centering
    \includegraphics[scale = 0.5]{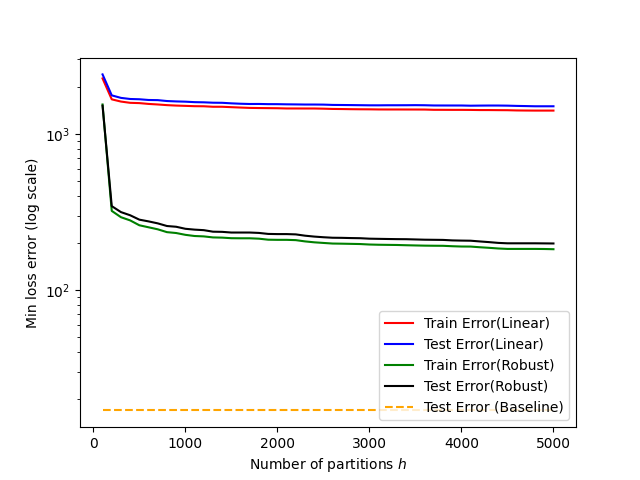}
    \caption{Comparison of train error and test error (min-loss) of Algorithms \ref{algo:subset2}  (with number of random partitions $h$) when we use a linear model/robust linear model to fit each part of partition.\label{fig:compare}}
\end{figure}

\paragraph{Mixture of linear datasets:} In this experiment, we create two datasets $(X_1,y_1)$ (using \texttt{sklearn.datasets.makeregression} with $4$ features, standard deviation of noise = $4.0$ and bias = $100.0$) and $(X_2,y_2)$ (using \texttt{makeregression} module from \texttt{sklearn.datasets} with $4$ features, standard deviation of noise = $4.0$ and bias = $0.0$) generated according to two distinct noisy linear models. Subsequently we combine the datasets and shuffle the datapoints to create a single one $(X,y)$. We partition the dataset $(X,y)$ intro training data ($3200$ datapoints) and test data ($800$ datapoints). We first design a baseline model as if an oracle provides us the information of which datapoint in $(X,y)$ was generated according to which mode. In particular we fit two distinct linear regression models $\beta_1,\beta_2$ on the training data restricted to $(X_1,y_1)$ and $(X_2,y_2)$; subsequently, we predict on the test data using the two linear regression models and compute the min-loss. Using this model, the mean-squared training error is $15.256$ and the mean min-loss error on the test data is $17.013$. Next, we compare the performance (min-loss on the train and test data) of Algorithm $\ref{algo:subset2}$ (averaged over $100$ implementations and $|A|=150$) which is depicted in Figure \ref{fig:compare}.
In Figure \ref{fig:compare}, we empirically compare the two possibilities namely the linear model and the robust linear model on each part of the partition. It is empirically quite evident that Algorithm \ref{algo:subset2} when used with the robust regression model achieves a significantly better performance as compared to Algorithm \ref{algo:subset2} used with the linear model.
We also implement the AM algorithm (see Algorithm \ref{alg:am}) on this dataset. Here, we initialize $\theta_1^{(0)},\theta_1^{(1)}$ by $\beta_1+\eta_1$ and $\beta_2+\eta_2$ where $\eta_1,\eta_2$ are random vectors where each entry is generated independently according to a Gaussian with zero mean and variance $\sigma^2$. We vary $\sigma$ and after $40$ iterations, we compare the min-loss error (averaged over $15$ implementations with $\gamma=0.1$) on the train and test datasets. We provide these results in Table \ref{table:am}. 

\begin{table}[t]

\begin{center}
\begin{tabular}{ c|c|c|c|c } 
 \toprule
 $\sigma$ & 10 & 50 & 100 & 200 \\ \midrule 
 Train Error & 15.65 & 96.44 & 1350.22 & 2408.80  \\ 
 Test Error & 15.35 & 94.35 & 1341.29 & 2380.27  \\ 
 \bottomrule
\end{tabular}

\end{center}
\caption{Train/Test error (Min loss) of Algorithm \ref{alg:am} for different initializations after $40$ iterations}
\label{table:am}
\end{table}

{\bf More on other experiments: } We provide some missing figures and other details from the experiments here. Table~\ref{table:nonlinear1} provides the training set min-losses for all the non-linear synthetic datasets. Table~\ref{table:samples} provides the datasets statistics for the movie lens dataset that we use.

\begin{table}[t]

\begin{center}
\begin{tabular}{ c|c|c|c } 
 \toprule
 Dataset & LR  & Alg \ref{alg:am} (Mean, Var) & Alg \ref{algo:subset2} (Mean,Var) \\ \midrule
   A &     22.57  &  (20.37,10.02) & (12.01,0.21)  \\
   B &     18806  & N/A & (5105, 51537) \\
   C &     15.34  & N/A & (7.24,0.16)  \\    
 \bottomrule
\end{tabular}

\end{center}
\caption{Mean (Mean) and Variance (Var) of min-loss on Training data generated from non-linear synthetic datasets. LR corresponds to a simple linear regression model and N/A implies that the algorithm did not converge in any implementation.}
\label{table:nonlinear1}
\end{table}

\begin{table}[t]

\begin{center}
\begin{tabular}{ c|c|c|c } 
 \toprule
 $u$ & $v$ & Train Data & Test Data \\ \midrule
 `1010' & `2116' & 836 & 209 \\
 `752' & `1941' & 1002 & 251 \\
 `752' &  `2116' & 827 & 207  \\
 `752' & `2909' & 844 & 212\\
 `1010' & `4725' & 818 & 205 \\
 \bottomrule
\end{tabular}

\end{center}
\caption{Size of train and test data for users $(u,v)$}
\label{table:samples}
\end{table}

\end{document}